\documentclass{IEEEtaes}
\usepackage{color,array,amsthm}
\usepackage{amsmath,amsfonts}
\usepackage{algorithm}
\usepackage{algpseudocode} 
\usepackage[caption=false,font=footnotesize]{subfig}
\usepackage{textcomp}
\usepackage{stfloats}
\usepackage{url}
\usepackage{verbatim}
\usepackage{graphicx}
\graphicspath{{figs_TAES2/}{./}}
\usepackage{cite}
\usepackage{booktabs}
\usepackage{longtable}
\usepackage{multirow}
\usepackage{mathtools,amssymb}
\usepackage{siunitx}
\usepackage{tabularx}
\usepackage{makecell} 
\newcolumntype{Y}{>{\raggedright\arraybackslash}X} 
\usepackage{bm}

\usepackage{commath}

\newtheorem{theorem}{Theorem}

\jvol{XX}
\jnum{XX}
\jmonth{XXXXX}
\paper{1234567}
\pubyear{2022}
\doiinfo{TAES.2022.Doi Number}
\begin{document}
\title{Trajectory Planning for a Multi-UAV Rigid-Payload Cascaded Transportation System Based on Enhanced Tube-RRT*}
\author{Jianqiao Yu}
\member{Member, IEEE}
\author{Jia Li}
\member{Member, IEEE}
\affil{Beijing Institute of Technology, Beijing , China}
\author{Tianhua Gao}
\member{Member, IEEE}
\affil{University of Tsukuba, Ibaraki, Japan}
\authoraddress{
  Authors’ addresses: Jianqiao Yu and Jia Li are with the School of Aerospace Science and Technology, Beijing Institute of Technology, Beijing 100081, China.
 (e-mail: jianqiao@bit.edu.cn; 3120240042@bit.edu.cn). 
 T. Gao is with the Graduate School of Systems and Information Engineering, University of Tsukuba, Japan. (e-mail: gao.tianhua.tkb\_gb@u.tsukuba.ac.jp).
(Corresponding author: T. Gao.)}
\supplementary{For the trajectory planning and tracking of a multi-UAV rigid payload cascade transportation system, see supplementary materials.
This work has been submitted to the IEEE for possible publication. Copyright may be transferred without notice, after which this version may no longer be accessible.}
\maketitle
\begin{abstract}This paper presents a two-stage trajectory planning framework for a multi-UAV rigid-payload cascaded transportation system, 
  aiming to address planning challenges in densely cluttered environments.
In Stage~I, an Enhanced Tube-RRT* algorithm is developed by integrating active hybrid sampling and an adaptive expansion strategy, enabling rapid generation of a safe and feasible virtual tube in environments with dense obstacles.
Moreover, a trajectory smoothness cost is explicitly incorporated into the edge cost to reduce excessive turns and thereby mitigate cable-induced oscillations.
Simulation results demonstrate that the proposed Enhanced Tube-RRT* achieves a higher success rate and effective sampling rate than mixed-sampling Tube-RRT* (STube-RRT*) and 
adaptive-extension Tube-RRT*(AETube-RRT*), while producing a shorter optimal path with a smaller cumulative turning angle.
In Stage~II, a convex quadratic program is formulated by considering payload translational and rotational dynamics, cable tension constraints, and collision-safety constraints, yielding a smooth, collision-free desired payload trajectory.
Finally, a centralized geometric control scheme is applied to the cascaded system to validate the effectiveness and feasibility of the proposed planning framework, offering a practical solution for payload attitude maneuvering in densely cluttered environments.
\end{abstract}
\begin{IEEEkeywords}
Centralized geometric control, multi-UAV rigid-payload cascaded transportation system, obstacle avoidance, trajectory planning
\end{IEEEkeywords}
\section{INTRODUCTION}
With the continuous increase in mission complexity and payload requirements, cooperative aerial transportation systems composed of multiple unmanned aerial vehicles (UAVs) \cite{10758214,10780984,11036553} 
have gradually emerged as an important technological approach to overcome the limitations of payload capacity and maneuverability inherent in single-UAV platforms.
Such systems not only significantly enhance the overall payload capability, but also provide improved maneuvering flexibility \cite{cbsystems} 
and mission adaptability under heavy-load and high-inertia conditions.The core challenge of motion planning for these systems lies in generating executable trajectories that are safe, 
smooth, and energy-efficient under the combined influence of multiple safety constraints, system dynamics constraints, and actuator limitations \cite{10962902}.

Due to the highly underactuated nature of cable-suspended payload transportation systems and their pronounced strongly coupled nonlinear dynamics, a substantial body of research has proposed representative trajectory planning schemes by leveraging explicit system dynamic models.
In \cite{Sreenath2013DynamicsCA}, the authors first proved that cooperative transportation systems with multiple quadrotors satisfy the differential flatness condition, and then parameterized the system states and control inputs using flat outputs to generate smooth trajectories with high-order continuity via optimization.
Although differential flatness enables an explicit parameterization of the trajectory generation problem and thus converts it into an algebraic planning problem, this structured parameterization becomes increasingly difficult to satisfy multiple performance objectives and feasibility requirements as the system scale grows and additional complex conditions (e.g., tension constraints, obstacle avoidance, and input limits) are introduced.
Consequently, more studies have shifted to continuous optimization frameworks based on optimal control \cite{10478625,8962254,Chai2025TMechTrajPlan,10325595,CHAN2026111460}.

In \cite{10325595}, a direct collocation--based optimal trajectory planning framework was adopted, in which the coupled UAV--payload dynamics, input constraints, and obstacle-avoidance constraints were incorporated into a unified optimization problem, enabling the simultaneous treatment of multiple constraint classes at the optimization level.
In \cite{CHAN2026111460}, the space was reconstructed under a gravity-normal (GN) formulation, and a direct multiple shooting method was employed to generate globally time-optimal trajectories, which simplified the expression of spatial constraints and resulted in shorter flight time.
Since the performance of continuous optimal control methods is often sensitive to the quality of the initial guess, this has motivated the integration of planning frameworks with global search capability.
In \cite{Chai2025TimeoptimalGT}, the original model was simplified to a second-order integrator as the dynamic constraint, and time-optimal trajectories were generated in the sense of the system's bounding sphere.
In \cite{11358534}, PolyFly was proposed as a globally optimal planning framework by modeling the quadrotor, cable, payload, and environment as orientation-aware polyhedra and constructing differentiable constraints via duality theory, thereby enabling fast optimal trajectory generation with a non-conservative geometric representation.
In \cite{11153080}, the pc-dbCBS planner was proposed, which achieves planning with dynamic constraints at arbitrary times by combining three-layer conflict detection/resolution and state stacking, together with alternating representations between stacked states and minimal coordinates.
In \cite{11245968}, a MINCO-based optimization framework was developed by integrating rotational formation regulation and GeoSafe safe-region expansion, overcoming the limitations of conventional four-DoF planning in ultra-narrow spaces and enabling efficient and scalable passage for multi-UAV payload systems.
In \cite{10144367}, an online formation planning method was proposed for cooperative transportation by multiple multirotor UAVs with unknown payload mass and cable length; by analyzing cable tension, the authors designed formation criteria and an optimization-based performance function, and generated optimal trajectories in conjunction with an admittance model.

Although global optimization can theoretically mitigate local minima, its computational complexity typically grows rapidly with the problem scale.
In recent years, hybrid planning frameworks that combine search-based methods \cite{10243043,10938620,1365,11123816} or sampling-based methods \cite{8869298,10802794,11030654} with local optimization have been widely adopted.
In \cite{11123816}, a heuristic-guided 3-D motion-primitive graph search algorithm was proposed to improve trajectory smoothness, and the cable tension was further optimized by incorporating the payload translational dynamics.
In \cite{10802794}, the payload--cable states produced by a geometric sampling strategy were used as the initial guess, and a full-dynamics motion planning problem with nonlinear trajectory optimization was solved, including all state variables such as the UAV attitude; a high success rate was demonstrated experimentally.
In \cite{11030654}, a 3-D trajectory planning method combining the BIGIT* algorithm with minimum-snap polynomial optimization was proposed: collision-free waypoint sequences were first generated, and then smooth and feasible trajectories were constructed and integrated with a two-layer explicit model predictive control framework, enabling real-time stabilized quadrotor flight control under constraints and uncertainties.

Despite the encouraging progress under complex constraints, the aforementioned planning approaches still suffer from two major limitations:
\emph{i}) for cooperative aerial transportation systems adopting distributed architectures, the payload attitude may remain uncontrolled during transportation \cite{Gao2025RobustnessEF}, which can cause attitude errors to accumulate during aggressive maneuvers;
\emph{ii}) many methods only enforce constraints on the payload translational dynamics while neglecting the coupling mechanism between attitude and cable tension, making it difficult to guarantee physical feasibility in both geometric and tension-related senses.

To reduce reliance on accurate models, learning-based trajectory generation methods have also been explored \cite{10711789,10268605,LI2021106887,11373839}.
For example, \cite{LI2021106887} developed a reinforcement learning framework that does not require an accurate dynamics model; trajectory generation was realized via value-function approximation and policy iteration, producing smooth transportation paths that suppress payload swing.
In \cite{11373839}, a reinforcement learning policy enabled high-speed and stable flight of a quadrotor slung-load system in designed scenarios, avoiding the high computational burden of traditional optimization methods arising from cable mode switching and complex dynamics modeling.
Moreover, several studies have investigated cooperative transportation from an integrated planning--control perspective.
In \cite{10178340}, higher-order time-varying control barrier functions (HOCBFs) were introduced to enforce complex Signal Temporal Logic specifications, and transportation tasks were executed through a finite set of discrete waypoints.
In \cite{CAI2025110255}, a new co-design strategy for the planner and controller was proposed, where an event-triggered real-time planner can effectively suppress payload swing.
In \cite{doi:10.1126/scirobotics.adu8015}, a centralized planner was used in a laboratory environment with predefined obstacle/no-fly regions to achieve highly agile payload pose tracking and online obstacle avoidance, extending the planning and control capability of transportation systems in controlled environments.

While these studies have further advanced planning methods for aerial transportation systems, systematic solutions remain lacking for trajectory planning of cascaded aerial transportation systems operating in outdoor environments with complex obstacles, especially under dense obstacle conditions:
\emph{i}) the volumes of outdoor obstacles and transportation systems are non-negligible, making it difficult to express safety constraints using point-mass models or overly conservative inflation approximations;
\emph{ii}) the cascaded structure introduces stronger geometric occupancy and motion coupling, increasing the dimensionality of system--environment collision checking and feasible-trajectory construction, thereby limiting the scalability of conventional methods in complex environments.

Motivated by these challenges, this paper focuses on the trajectory planning problem of a multi-UAV cascaded transportation system with rigid payloads in outdoor environments with dense obstacles, while explicitly accounting for the impact of both obstacle volume and system volume on safety constraints.
The objective is to achieve a better balance among safety, efficiency, and computational complexity, and to provide a generalizable planning framework and implementation pathway for reliable aerial transportation in complex environments.
Specifically, for a multi-UAV cascaded transportation system with rigid payloads built upon a centralized control strategy \cite{Gao2025RobustnessEF}, we explicitly incorporate dynamic constraints---in particular, payload attitude variations and cable-tension variations---within a two-stage planning scheme.
First, an Enhanced Tube-RRT* algorithm is developed based on the Tube-RRT* framework, where proactive mixed sampling and adaptive expansion strategies are introduced to improve feasible-solution discovery efficiency in dense-obstacle environments, enabling sampling-based search to consistently output collision-free feasible paths with safety margins under complex geometric constraints.
Furthermore, we propose a payload-oriented trajectory planning method that incorporates cable-tension feasibility and payload dynamics as explicit constraints, ensuring consistency between the planned trajectories and the subsequent centralized tracking controller at both the dynamics and constraint levels.
This reduces the risk of infeasibility caused by planning--control mismatch and improves the overall system safety and robustness.

The main contributions of this paper are summarized as follows:
\begin{enumerate}
  \item To the best of our knowledge, this paper presents the first two-stage obstacle-avoidance trajectory planning framework for multi-UAV cascaded transportation systems with rigid payloads, in which payload attitude variations and system-volume constraints are explicitly considered during planning.
  \item We develop Enhanced Tube-RRT* based on Tube-RRT*, where proactive mixed sampling and adaptive expansion strategies improve the search efficiency and solution quality in narrow passages and dense-obstacle scenarios, enabling rapid generation of collision-free feasible paths.
  \item A centralized tracking control strategy is employed to achieve dynamic tracking of the payload trajectory and attitude, and simulations validate the feasibility and effectiveness of the proposed planning--control scheme.
\end{enumerate}

The remainder of this paper is organized as follows.
Section~II presents the dynamic model of the cascaded aerial transportation system and formulates the problem.
Section~III introduces the proposed Enhanced Tube-RRT* algorithm.
Section~IV formulates the trajectory optimization problem for the cascaded transportation system.
Section~V reports simulation results and verifies trajectory feasibility using centralized geometric control.
Finally, Section~VI concludes the paper.

\textbf{Notation}
\emph{
The Euclidean space of dimension $n$ is denoted by $\mathbb{R}^n$.
The special orthogonal group is denoted by $SO(3)$ and its Lie algebra
by $\mathfrak{so}(3)$. The two–sphere is denoted by $\mathbb{S}^2$.
For $x\in\mathbb{R}^3$, the hat map $(\cdot)^\wedge:\mathbb{R}^3
\rightarrow\mathfrak{so}(3)$ satisfies $\hat{x}y=x\times y$ for any
$x,y\in\mathbb{R}^3$, where $\times$ denotes the cross product.The superscript $(k)$ denotes the $k$-th order time derivative with $k>0$.
The notation $\|\cdot\|$ denotes the Euclidean norm and $\mathbf{I}$ denotes the
identity matrix.}
\section{Problem Statement and System Modeling}
\subsection{Problem Formulation}
\label{subsec:problem_formulation}
\begin{figure}[htbp]
  \centering
  \includegraphics[width=0.8\linewidth]{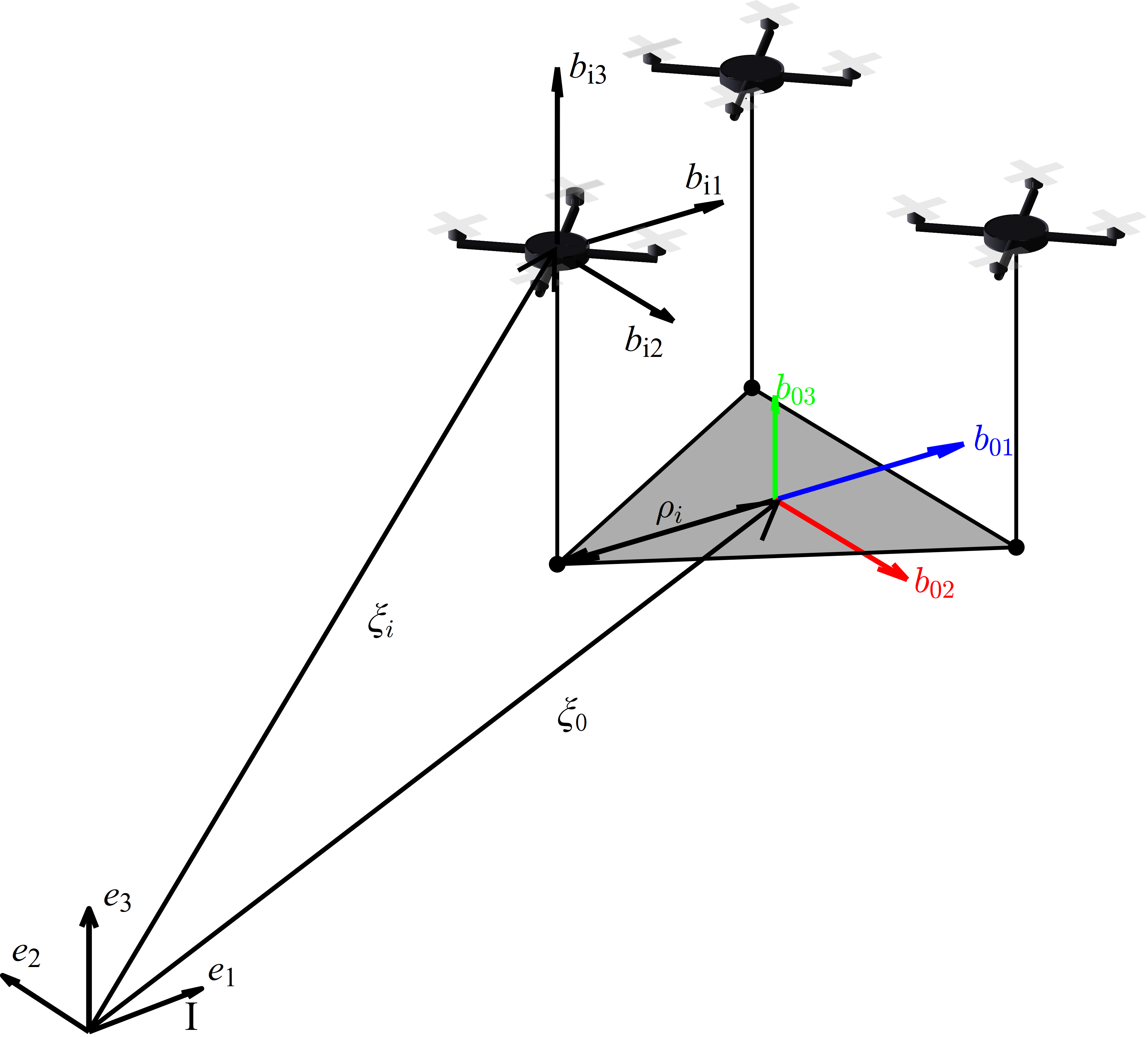}
  \caption{Schematic of the multi-UAV tethered payload transportation system.}
  \label{fig:system}
\end{figure}

Consider a system in which $n_{C}$ unmanned aerial vehicles (UAVs) cooperatively transport a rigid payload using cables (in this paper $n_{C}=3$), as illustrated in Fig.~\ref{fig:system}.
Let the load-carrying UAVs be indexed by nodes $i\in\{1,\dots,n_{C}\}$.
Each node $i$ is connected to a corner of the payload via an inextensible and massless cable with length $l_{\mathrm{i}}>0$.

The system evolves in a bounded three-dimensional workspace $\mathcal{X}\subset\mathbb{R}^3$ containing a set of known static obstacles
$\mathcal{O} = \{\mathcal{O}_k\}_{k=1}^{N_o}$.
The corresponding free space is defined as
$\mathcal{X}_{\mathrm{free}}=\mathcal{X}\setminus \mathcal{O}$.

Given the initial system state $s(0)=s_0$, the initial payload position $\boldsymbol{\xi}_{\mathrm{o}}\in\mathbb{R}^3$, and the goal position $\boldsymbol{\xi}_{\mathrm{g}}\in\mathbb{R}^3$,
the objective of planning is to generate a smooth trajectory $\boldsymbol{\xi}^d\in\mathbb{R}^3$ for the payload that accounts for both obstacle avoidance and system-volume constraints, while satisfying the payload dynamic constraints and cable-tension feasibility.

In summary, this paper addresses the following problem:
\emph{for a multi-UAV cascaded transportation system carrying a rigid payload operating in a dense-obstacle three-dimensional environment with system parameter constraints, design a trajectory planning method that transports the payload from the initial position to the target position by generating a smooth desired trajectory satisfying the cascaded geometric constraints, dynamic constraints, and cable-tension constraints, while ensuring collision avoidance among the payload, UAVs, cables, and obstacles.
The resulting trajectory serves as the reference input for subsequent trajectory tracking using a centralized geometric control strategy.}
\subsection{System Model}
Let the position and attitude of the payload with respect to the inertial frame $\mathrm{I}$ be denoted by $\boldsymbol{\xi}_0 \in \mathbb{R}^3$ and $\mathbf{R}_0 \in \mathrm{SO}(3)$, respectively. 
The direction of the $i$th cable is represented by $\boldsymbol{q}_i \in \mathbb{S}^2$, and $\boldsymbol{\rho}_i \in \mathbb{R}^3$ denotes the position of the corresponding attachment point with respect to the payload center of mass. 
Let $\mathbf{R}_i \in \mathrm{SO}(3)$ denote the attitude of the $i$th UAV. 
Then, the position of the $i$th UAV, denoted by $\boldsymbol{\xi}_i \in \mathbb{R}^3$, is given by
\begin{equation}
\boldsymbol{\xi}_i
=
\boldsymbol{\xi}_0
+
\mathbf{R}_0 \boldsymbol{\rho}_i
+
l_i \boldsymbol{q}_i .
\end{equation}

Based on Lagrangian mechanics formulated on nonlinear manifolds, the coordinate-free equations of motion can be derived as follows \cite{7843619}:
\begin{equation}
\begin{aligned}
&m_0(\ddot{\boldsymbol{\xi}}_0-\boldsymbol{g})
+\sum_{i=1}^{n_C} m_i \boldsymbol{q}_i \boldsymbol{q}_i^{\top}
\left(
\ddot{\boldsymbol{\xi}}_0
-\boldsymbol{g}
-\mathbf{R}_0\widehat{\boldsymbol{\rho}}_i\dot{\boldsymbol{\Omega}}_0
\right) \\
&=
\Delta_{\xi_0}
+\sum_{i=1}^{n_C}
\Big(
\boldsymbol{u}_i
+\Delta_{\xi_i}
-m_i l_i\|\boldsymbol{\omega}_i\|^{2}\boldsymbol{q}_i \\
&\qquad
-m_i\boldsymbol{q}_i\boldsymbol{q}_i^{\top}
\mathbf{R}_0
\widehat{\boldsymbol{\Omega}}_0^{2}
\boldsymbol{\rho}_i
\Big).
\end{aligned}
\end{equation}
\begin{equation}
\begin{aligned}
&\mathbf{J}_0\dot{\boldsymbol{\Omega}}_0
+\widehat{\boldsymbol{\Omega}}_0
\mathbf{J}_0
\boldsymbol{\Omega}_0 \\
&+
\sum_{i=1}^{n_C}
m_i\widehat{\boldsymbol{\rho}}_i
\mathbf{R}_0^{\top}
\boldsymbol{q}_i\boldsymbol{q}_i^{\top}
\left(
\ddot{\boldsymbol{\xi}}_0
-\boldsymbol{g}
-\mathbf{R}_0\widehat{\boldsymbol{\rho}}_i\dot{\boldsymbol{\Omega}}_0
\right) \\
&=
\Delta_{R_0}
+\sum_{i=1}^{n_C}
\Big(
\widehat{\boldsymbol{\rho}}_i
\mathbf{R}_0^{\top}\boldsymbol{u}_i
+\Delta_{\xi_i} \\
&\qquad
-m_i l_i\|\boldsymbol{\omega}_i\|^{2}\boldsymbol{q}_i
-m_i\boldsymbol{q}_i\boldsymbol{q}_i^{\top}
\mathbf{R}_0
\widehat{\boldsymbol{\Omega}}_0^{2}
\boldsymbol{\rho}_i
\Big).
\end{aligned}
\end{equation}

\begin{equation}
\begin{aligned}
m_i l_i \dot{\boldsymbol{\omega}}_i
&=
m_i\widehat{\boldsymbol{q}}_i^{\top}
\Big(
\ddot{\boldsymbol{\xi}}_0
-\boldsymbol{g}
-\mathbf{R}_0\widehat{\boldsymbol{\rho}}_i\dot{\boldsymbol{\Omega}}_0 \\
&\quad
+\mathbf{R}_0\widehat{\boldsymbol{\Omega}}_0^{2}\boldsymbol{\rho}_i
\Big)
-\widehat{\boldsymbol{q}}_i^{\top}
\big(
\boldsymbol{u}_i^{\perp}
+\Delta_{\xi_i}^{\perp}
\big).
\end{aligned}
\end{equation}

\begin{equation}
\mathbf{J}_i\dot{\boldsymbol{\Omega}}_i
+
\widehat{\boldsymbol{\Omega}}_i
\mathbf{J}_i
\boldsymbol{\Omega}_i
=
\boldsymbol{M}_i
+
\Delta_{R_i}.
\end{equation}
Here, $m_0$ denotes the mass of the payload, and $m_i$ denotes the mass of the $i$th UAV.
$\mathbf{J}_0$ represents the inertia matrix of the payload, while $\mathbf{J}_i$ denotes the rotational inertia matrix of the $i$th UAV.
The angular velocity of the payload is denoted by $\boldsymbol{\Omega}_0\in\mathbb{R}^3$.
The angular velocity of the $i$th cable is denoted by $\boldsymbol{\omega}_i\in\mathbb{R}^3$, and the angular velocity of the $i$th UAV is denoted by $\boldsymbol{\Omega}_i\in\mathbb{R}^3$.
The control thrust generated by the $i$th UAV is denoted by $f_i\in\mathbb{R}$, and the control moment applied to the $i$th UAV is denoted by $\boldsymbol{M}_i\in\mathbb{R}^3$.
The gravitational acceleration vector is denoted by $\boldsymbol{g}\in\mathbb{R}^3$.
The disturbance terms $\Delta_{\xi_0}\in\mathbb{R}^3$ and $\Delta_{R_0}\in\mathbb{R}^3$ represent the uncertainties and disturbances acting on the translational and rotational dynamics of the payload, respectively.

The vector $\boldsymbol{u}_i\in\mathbb{R}^3$ denotes the total thrust generated by the $i$th quadrotor, defined as
\begin{equation}
\boldsymbol{u}_i = -f_i \mathbf{R}_i \boldsymbol{e}_3 ,
\end{equation}
where $\boldsymbol{e}_3=[0,0,1]^{\top}$ is the unit vector along the third axis of the inertial frame $\mathrm{I}$.

In this paper, a hierarchical cascaded control framework is adopted.
The payload controller generates the desired thrust component parallel to the cable, denoted by $\boldsymbol{u}_i^{\parallel}$,
while the perpendicular component $\boldsymbol{u}_i^{\perp}$ is determined by the cable dynamics.
They are defined as
\begin{equation}
\boldsymbol{u}_i^{\parallel}
=
\boldsymbol{q}_i\boldsymbol{q}_i^{\top}\boldsymbol{u}_i
\end{equation}
\begin{equation}
\boldsymbol{u}_i^{\perp}
=
-\widehat{\boldsymbol{q}}_i^{2}\boldsymbol{u}_i
=
(\mathbf{I}-\boldsymbol{q}_i\boldsymbol{q}_i^{\top})\boldsymbol{u}_i .
\end{equation}

Similarly, the disturbance $\Delta_{\xi_i}\in\mathbb{R}^3$ acting on the $i$th UAV can be decomposed as
\begin{equation}
\Delta_{\xi_i}^{\parallel}
=
\boldsymbol{q}_i\boldsymbol{q}_i^{\top}\Delta_{\xi_i},
\end{equation}
\begin{equation}
\Delta_{\xi_i}^{\perp}
=
-\widehat{\boldsymbol{q}}_i^{2}\Delta_{\xi_i}
=
(\mathbf{I}-\boldsymbol{q}_i\boldsymbol{q}_i^{\top})\Delta_{\xi_i}.
\end{equation}

\section{Enhanced Tube-RRT$^\ast$}
This section presents an Enhanced Tube-RRT$^\ast$ algorithm tailored for load-transport scenarios. The method focuses on planning techniques that improve trajectory quality. These techniques include active mixed sampling, an adaptive extension strategy, and a composite cost function. The goal is to maximize trajectory smoothness. This work also provides a foundation for subsequent trajectory generation and control of the overall system.

\subsection{Active Mixed Sampling}
First, a Voronoi diagram is introduced to partition the configuration space $\mathcal{X}$ into $K$ subregions.
Each subregion is denoted as $\mathcal{X}_k\,(k=1,\dots,K)$ and defined as
\begin{equation}
\mathcal{X}_k=
\left\{
\boldsymbol{\xi} \in \mathcal{X}\ \bigg|\ 
\left\lVert \boldsymbol{\xi}-\boldsymbol{r}_i\right\rVert \le
\left\lVert \boldsymbol{\xi}-\boldsymbol{r}_j\right\rVert,\ \forall\, i\neq j
\right\},
\end{equation}
where $\boldsymbol{\xi} \in \mathbb{R}^3$ denotes an arbitrary point in the
workspace, $\boldsymbol{r}_i, \boldsymbol{r}_j \in \mathbb{R}^3$ denote two distinct generator points.

To avoid the sampling instability caused by fixed Gaussian distribution parameters in \cite{Ning2026TSRIL}, a Beta posterior uncertainty model is introduced to update the sampling probability of each subregion.
Thompson sampling is further employed to actively select subregions, eliminating the need for additional heuristic weights or manual parameter tuning.

Let $\theta_k\in(0,1)$ denote the sampling–expansion success probability of subregion $\mathcal{X}_k$. The prior distribution is defined as
\begin{equation}
\begin{aligned}
  \theta_k &\sim \mathrm{Beta}\!\left(\alpha_k^{(0)},\,\beta_k^{(0)}\right),\\
\alpha_k^{(0)} &= 1+\kappa_p\left(1-\rho_k^{\mathrm{obs}}\right),\\
\beta_k^{(0)}  &= 1+\kappa_p\,\rho_k^{\mathrm{obs}},
\end{aligned}
\end{equation}
where positive constant $\kappa_p$ denotes the strength of the Beta prior, and $\rho_k^{\mathrm{obs}}\in[0,1]$ denotes the estimated obstacle density of the subregion, which can be obtained via a Monte Carlo method \cite{Ning2026TSRIL}.
As the obstacle density increases, the prior mean 
$\mathbb{E}[\theta_k] = \frac{\alpha_k^{(0)}}{\alpha_k^{(0)}+\beta_k^{(0)}}$ 
becomes smaller, indicating a lower probability of feasible expansion.

Each expansion attempt within a subregion is treated as a Bernoulli trial.
At each iteration, a sample of the success probability is drawn from the posterior distribution of every subregion, and the subregion with the largest sampled value is selected for uniform sampling:
\begin{align}
\hat{\theta}_k^{(t)} &\sim \operatorname{Beta}\!\left(\alpha_k^{(t)},\,\beta_k^{(t)}\right), \quad k=1,\ldots,K
\label{eq:beta_sample}\\
k^{\ast} &= \arg\max_{k\in\{1,\ldots,K\}} \hat{\theta}_k^{(t)},
\label{eq:best_region}\\
\boldsymbol{\xi}_{\mathrm{rand}} &\sim \operatorname{Unif}\!\left(\mathcal{X}_{k^{\ast}}\right).
\label{eq:sampling}
\end{align}

Once an observation $y_t$ is obtained within $\mathcal{X}_{k^{\ast}}$, the posterior parameters $\alpha_{k^\ast}$ and $\beta_{k^\ast}$ are updated online as
\begin{equation}
\begin{split}
\alpha_{k^*} \leftarrow \alpha_{k^*}+y_t, \\
\beta_{k^*} \leftarrow \beta_{k^*}+(1-y_t),
\end{split}
\label{eq:beta_update}
\end{equation}
where $y_t$ denotes the binary feedback obtained during the $t$-th sampling and expansion attempt in subregion $\mathcal{X}_k$:
\begin{equation}
y_t=
\begin{cases}
1, & \text{if the generated node is feasible},\\
0, & \text{otherwise}.
\end{cases}
\end{equation}

Furthermore, to improve robustness in complex environments and avoid losing exploration capability due to excessive bias (e.g., missing narrow passages), the following mixed sampling strategy is adopted:
\begin{equation}
p(\boldsymbol{\xi})=\epsilon_u\,u(\boldsymbol{\xi})+(1-\epsilon_u)\big[p_g\,\delta(\boldsymbol{\xi}-\boldsymbol{\xi}_g)+(1-p_g)\,q(\boldsymbol{\xi})\big],
\label{eq:sampling_mixture_cn}
\end{equation}
where $u(\boldsymbol{\xi})$ denotes the global uniform sampling distribution over the state space, $\epsilon_u$ is a positive constant uniform mixing coefficient, 
$p_g \in (0,1)$ is a constant goal-bias probability, 
$\delta(\boldsymbol{\xi}-\boldsymbol{\xi}_g)$ denotes a point-mass distribution at the goal state $\boldsymbol{\xi}_g$,
and $q(\boldsymbol{\xi})$ denotes the distribution induced by the proposed active sampling strategy.

The sampling procedure is summarized in Algorithm~\ref{alg:bass_srrtstar_cn}.
By introducing Bayesian active subregion sampling, regions with higher posterior means and lower uncertainty are selected more frequently, while regions with limited samples and higher posterior uncertainty still retain a nonzero probability of being explored.
This mechanism effectively reduces invalid sampling, improves the probability of discovering narrow passages, and enhances the overall planning efficiency.
\begin{algorithm}[t]
\caption{Bayesian Active Subregion Sampling Strategy}
\label{alg:bass_srrtstar_cn}
\begin{algorithmic}[1]
\Require Subregion partition $\{\mathcal{X}_k\}_{k=1}^K$, posterior parameters $(\alpha_{k},\beta_{k})$, mixing coefficient $\epsilon_u$
\Ensure Sample point ${\boldsymbol{\xi}}_{rand}$ and posterior update of the selected subregion $\mathcal{X}_k$
\State Sample $u\sim \mathrm{Unif}(0,1)$
\If{$u<\epsilon_u$}
    \State Sample ${\boldsymbol{\xi}}_{rand} \sim \mathrm{Unif}(\mathcal{X})$
    \State $k^{\ast} \gets \mathrm{None}$ 
\Else
    \For{$k=1$ to $K$}
        \State Sample $\hat{\theta}_k \sim \mathrm{Beta}(\alpha_k,\beta_k)$
    \EndFor
    \State Compute $k^{\ast}$ according to (\ref{eq:best_region})
    \State Generate ${\boldsymbol{\xi}}_{rand}$ according to (\ref{eq:sampling})
\EndIf
\State Expand to obtain ${\boldsymbol{\xi}}_{\mathrm{new}}$ according to Algorithm~\ref{alg:adaptive_expand}
\State Collision checking and feasibility test:
      if ${\boldsymbol{\xi}}_{\mathrm{new}}$ can be added to the tree, set $y_t\gets 1$; otherwise $y_t\gets 0$
\If{$ k^{\ast} \neq \mathrm{None}$}
    \State Update $(\alpha_{k^\ast},\beta_{k^\ast})$ according to (\ref{eq:beta_update})
\EndIf
\end{algorithmic}
\end{algorithm}
\subsection{Adaptive Extension Strategy}
Given the nearest node ${\boldsymbol{\xi}}_{\text{near}}\in\mathbb{R}^3$ and the sampled point ${\boldsymbol{\xi}}_{\mathrm{rand}}\in\mathbb{R}^3$, the standard RRT* direction $\boldsymbol{u}_{\mathrm{rand}}\in\mathbb{S}^2$ and the initial constant expansion step size $\epsilon_0$ are defined.
In standard tree expansion, the direction and new node ${\boldsymbol{\xi}}_{\text{new}}\in\mathbb{R}^3$ are computed as
\begin{equation}
\boldsymbol{u}_{\mathrm{rand}}
=
\frac{\boldsymbol{\xi}_{\mathrm{rand}}-\boldsymbol{\xi}_{\mathrm{near}}}
{\|\boldsymbol{\xi}_{\mathrm{rand}}-\boldsymbol{\xi}_{\mathrm{near}}\|},
\label{eq:rand_direction}
\end{equation}
\begin{equation}
\boldsymbol{\xi}_{\mathrm{new}}
=
\boldsymbol{\xi}_{\mathrm{near}}
+
\epsilon_0\,\boldsymbol{u}_{\mathrm{rand}}.
\label{eq:standard_rrt_expand}
\end{equation}

Conventional repulsive fields are typically designed for point obstacles.
However, the volume of a multi-UAV payload transportation system is non-negligible.
In the presence of sharp features (e.g., polyhedral obstacles such as cubes), it is desirable to employ a potential field that both tightly conforms to obstacle geometry and yields smooth transitions.
To this end, a superquadric repulsive potential is adopted, and a goal attractive field is introduced to mitigate goal-unreachability issues.

The potential-field direction $\boldsymbol{u}_F\in\mathbb{S}^2$ is constructed from an obstacle repulsion term $\boldsymbol{F}_{\mathrm{rep}}({\boldsymbol{\xi}})\in\mathbb{R}^3$, a goal attraction term $\boldsymbol{F}_{\mathrm{att}}({\boldsymbol{\xi}},\boldsymbol{\xi}_{g})\in\mathbb{R}^3$, and a linear constraint term $\boldsymbol{F}_{\mathrm{line}}({\boldsymbol{\xi}})\in\mathbb{R}^3$ that regularizes the expansion direction:
\begin{align}
\boldsymbol{u}_F &= \dfrac{\boldsymbol{F}_{\mathrm{tot}}}{\|\boldsymbol{F}_{\mathrm{tot}}\|},
\label{eq:potential_direction}\\
\boldsymbol{F}_{\mathrm{tot}} &=
\boldsymbol{F}_{\mathrm{att}}({\boldsymbol{\xi}},\boldsymbol{\xi}_{g})
+ \boldsymbol{F}_{\mathrm{rep}}({\boldsymbol{\xi}})
+ \boldsymbol{F}_{\mathrm{line}}({\boldsymbol{\xi}}).
\label{eq:total_force}
\end{align}

The obstacle repulsion term is defined as $\boldsymbol{F}_{\mathrm{rep}}= -K_{\mathrm{rep}} \nabla_{{\boldsymbol{\xi}}}U({\boldsymbol{\xi}})$, where $K_{\mathrm{rep}} > 0$ is a constant repulsion gain.
The superquadric potential function is defined as
\begin{align}
U(\boldsymbol{\xi})
=
\frac{A e^{-\eta C(\boldsymbol{\xi})}}{C(\boldsymbol{\xi})},
\end{align}
where $A$ and $\eta$ are positive constants.
The corresponding implicit superquadric function is
\[
C({\boldsymbol{\xi}})
=
\left(
\left(\tfrac{\xi(1)}{f_1({\boldsymbol{\xi}})}\right)^{2b}
+
\left(\tfrac{\xi(2)}{f_2({\boldsymbol{\xi}})}\right)^{2b}
\right)^{\tfrac{2a}{2b}}
+
\left(\tfrac{\xi(3)}{f_3({\boldsymbol{\xi}})}\right)^{2a}
-1,
\]
where $f_i(\boldsymbol{\xi})$ ($i=1,2,3$) are scaling functions, and positive constant $a$ and positive constant $b$ are shape parameters that regulate the geometry of the level sets.

The goal attraction term is defined as $\boldsymbol{F}_{\mathrm{att}}=K_{\mathrm{att}}(\boldsymbol{\xi}_{g}-{\boldsymbol{\xi}})$.
The linear constraint term $\boldsymbol{F}_{\mathrm{line}}$ is introduced to suppress lateral deviations and thus improve local path smoothness:
\begin{align}
\boldsymbol{F}_{\mathrm{line}}
=
- K_{\mathrm{line}}(\boldsymbol{v}-(\boldsymbol{v}^{\top}\boldsymbol{\ell})\,\boldsymbol{\ell}),
\end{align}
where $K_{\mathrm{att}}$ and $K_{\mathrm{line}}$ denote positive constant gains,
$\boldsymbol{v} = \boldsymbol{\xi} - \boldsymbol{\xi_p} \in \mathbb{R}^3$, and $\boldsymbol{\xi_p} \in \mathbb{R}^3$ denotes the parent of the current nearest node $\boldsymbol{\xi}_{\mathrm{near}}$.
The unit vector of the previous expansion direction $\boldsymbol{\ell} \in \mathbb{S}^2$ is defined as
\begin{equation}
\boldsymbol{\ell}
=
\frac{{\boldsymbol{\xi}}_{\mathrm{near}}-{\boldsymbol{\xi}}_{\mathrm{p}}}
{\|{\boldsymbol{\xi}}_{\mathrm{near}}-{\boldsymbol{\xi}}_{\mathrm{p}}\|}.
\label{eq:previous_direction}
\end{equation}

To balance obstacle avoidance safety and sampling-based exploration, an adaptive weighted fusion between the random direction $\boldsymbol{u}_{\mathrm{rand}}$ and the potential-field direction $\boldsymbol{u}_F$ is employed.
Specifically, the fused expansion direction $\tilde{\boldsymbol{d}}$ is defined as
\begin{align}
\tilde{\boldsymbol{d}}
=\frac{\alpha({\boldsymbol{\xi}})\,\boldsymbol{u}_F+\big(1-\alpha({\boldsymbol{\xi}})\big)\,\boldsymbol{u}_{\mathrm{rand}}}
{\left\|\alpha({\boldsymbol{\xi}})\,\boldsymbol{u}_F+\big(1-\alpha({\boldsymbol{\xi}})\big)\,\boldsymbol{u}_{\mathrm{rand}}\right\|},
\label{eq:adaptive_direction}
\end{align}
where $\alpha({\boldsymbol{\xi}})\in[0,1]$ is an adaptive weight defined by
\begin{align}
\alpha({\boldsymbol{\xi}})
&=\alpha_{\mathrm{obs}}\cdot \alpha_{F}\notag\\
&=\left(\frac{d_0-d_{\mathrm{obs}}}{d_0}\right)\left(\frac{\|\boldsymbol{F}_{\mathrm{tot}}\|}{\|\boldsymbol{F}_{\mathrm{tot}}\|+F_0}\right),
\label{eq:alpha}
\end{align}
where $d_{\mathrm{obs}}$ denotes the distance from the current node to the nearest obstacle, $d_0$ and $F_0$ are positive constants, and $d_0$ represents the influence distance threshold of the potential field.
Hence, $\alpha({\boldsymbol{\xi}})$ increases as the node approaches obstacles.

To further enhance robustness, with probability $p_e$ the algorithm falls back to the original random expansion direction, 
preventing the potential field from causing the search to get trapped in local minima in dense obstacles, particularly in narrow passages.
Finally, the expansion direction $\boldsymbol{d}$ is defined as follows:
\begin{equation}
\boldsymbol{d}=
\begin{cases}
\boldsymbol{u}_{\mathrm{rand}}, & p\le p_e,\\
\tilde{\boldsymbol{d}}, & \text{otherwise}.
\end{cases}
\label{eq:fallback_cn}
\end{equation}

Moreover, since the steering capability of the system is limited by a maximum admissible curvature and the associated upper bound on lateral acceleration, the path cannot exhibit arbitrarily large direction changes at corners.
If a fixed step size $\epsilon_0$ is used, invalid and non-executable local segments may occur near sharp turns.
Therefore, the turning angle $\vartheta$ is explicitly incorporated, and a turn-angle adaptive strategy is adopted to shorten the step size and reduce the dynamic burden during turning, thereby improving executability and convergence efficiency.
The angle-related candidate step size is defined as
\begin{align}
\epsilon_1 &= \epsilon_0\Big(\tau+(1-\tau)\exp(-\beta\vartheta/\pi)\Big),
\end{align}
where $\vartheta$ is the angle between the incoming direction and the current expansion direction, $\beta$ is a positive constant turn-angle factor, and $\tau\in(0,1]$ is a constant parameter..

In addition, the step size should adapt to environment complexity \cite{Ning2026TSRIL}.
An environment-dependent candidate step size is defined as
\begin{equation}
\begin{aligned}
\epsilon_2
&=
\epsilon_0\Bigl(1+\Pi_{[-S_{\lambda,\max},\,S_{\lambda,\max}]}\!\bigl(S_{\lambda k}\bigr)\Bigr),\\
\Pi_{[l,u]}(y)
&=
\arg\min_{h\in[l,u]} (h-y)^2,\\
S_{\lambda k}
&=
k_\lambda \frac{\rho_{\mathrm{obs}}(\mathcal{X})-\rho_k^{\mathrm{obs}}}{1-\rho_{\mathrm{obs}}(\mathcal{X})},\\
\rho_{\mathrm{obs}}(\mathcal{X})
&=
\frac{1}{K}\sum_{k=1}^{K}\rho_k^{\mathrm{obs}}.
\end{aligned}
\label{eq:eps_proj_Slambda}
\end{equation}
where $S_{\lambda k}$ is the step-size adjustment factor, $k_{\lambda}>0$ is the density scaling coefficient, $S_{\lambda,\max}>0$ is the saturation bound, 
and $\rho_{\mathrm{obs}}(\mathcal{X})$ denotes the average obstacle density of the configuration space.

To satisfy both obstacle-density effects and dynamic feasibility requirements, the final step size is obtained using a soft-min operator that smoothly selects the smaller value between the two candidates:
\begin{align}
\epsilon
&=\mathrm{softmin}_{\alpha}(\epsilon_1,\epsilon_2)\notag\\
&= \epsilon_m-\frac{1}{\alpha}\log\!\left(
e^{-\alpha(\epsilon_1-\epsilon_m)}+e^{-\alpha(\epsilon_2-\epsilon_m)}
\right),
\end{align}
where $\epsilon_m=\min\{\epsilon_1,\epsilon_2\}$ and $\alpha>0$ is a constant smoothing parameter.

In summary, the improved tree expansion updates the direction and step size according to
\begin{equation}
\boldsymbol{\xi}_{\mathrm{new}}=\boldsymbol{\xi}_{\mathrm{near}}+\epsilon\,\boldsymbol{d}.
\label{eq:tree_expand}
\end{equation}

The overall procedure is summarized in Algorithm~\ref{alg:adaptive_expand}.
\begin{algorithm}[t]
\caption{Adaptive Potential-Guided Tree Expansion Strategy}
\label{alg:adaptive_expand}
\begin{algorithmic}[1]
\Require nearest node ${\boldsymbol{\xi}}_{\mathrm{near}}$, sampled point ${\boldsymbol{\xi}}_{\mathrm{rand}}$, initial step size $\epsilon_0$
\Ensure new node ${\boldsymbol{\xi}}_{\mathrm{new}}$
\State Compute $\boldsymbol{u}_{\mathrm{rand}}$ using~\eqref{eq:rand_direction}
\State Compute $\boldsymbol{u}_F$ using~\eqref{eq:potential_direction}--\eqref{eq:previous_direction}
\State Compute $\tilde{\boldsymbol{d}}$ and the fallback direction using~\eqref{eq:adaptive_direction}--\eqref{eq:fallback_cn}
\State Sample $p\sim \mathrm{Unif}(0,1)$
\If{$p\le p_e$}
    \State $\boldsymbol{d}\gets \boldsymbol{u}_{\mathrm{rand}}$
\Else
    \State $\boldsymbol{d}\gets \tilde{\boldsymbol{d}}$
\EndIf
\State Compute $\epsilon$ using the soft-min strategy
\State Update ${\boldsymbol{\xi}}_{\mathrm{new}}$ using~\eqref{eq:tree_expand}
\State \Return ${\boldsymbol{\xi}}_{\mathrm{new}}$
\end{algorithmic}
\end{algorithm}

\subsection{Composite Cost Function}
\label{sec:composite_cost_cn}
Since the admissible acceleration of the system is limited and the lateral acceleration during turning often becomes the dominant constraint, this paper incorporates a smooth feasibility surrogate for cable-force constraints directly into the online search process, rather than relying solely on post-processing smoothing after obtaining a geometrically feasible path.
To this end, a turning-acceleration energy surrogate is introduced into the cost function to continuously penalize sampled points associated with high curvature and aggressive turning.
As a result, the search is biased toward producing trajectory segments with smoother curvature variations and lower tension requirements.

Consider an edge that expands from a parent node ${\boldsymbol{\xi}}_p$ to a child node ${\boldsymbol{\xi}}_{\mathrm{new}}$.
The corresponding turning acceleration $\boldsymbol{a}\in \mathbb{R}^3$ is computed as
\begin{align}
\boldsymbol{a} 
&= \frac{v_{\mathrm{ref}}\left(\boldsymbol{u}_{\mathrm{out}}-\boldsymbol{u}_{\mathrm{in}}\right)}{\Delta t},
\label{eq:turn_accel2} \\
\boldsymbol{u}_{\mathrm{in}} 
&= \boldsymbol{\xi}_p - \boldsymbol{\xi}_{pp},
\label{eq:uin_def} \\
\boldsymbol{u}_{\mathrm{out}} 
&= \boldsymbol{\xi}_{\mathrm{new}} - \boldsymbol{\xi}_p ,
\label{eq:uout_def}
\end{align}
where $\Delta t$ and $v_{\mathrm{ref}}$ are positive constants denoting the time interval and the reference speed, respectively; 
$\boldsymbol{u}_{\mathrm{out}}, \boldsymbol{u}_{\mathrm{in}} \in \mathbb{S}^2$ denote the outgoing and incoming unit directions, respectively; 
and ${\boldsymbol{\xi}}_{pp}\in \mathbb{R}^3$ denotes the parent of ${\boldsymbol{\xi}}_p$.

Furthermore, the acceleration penalty of the edge is defined as
\begin{equation}
A_e 
=\int \|\boldsymbol{a}(t)\|^2\,dt 
\;\;\approx\;\; \|\boldsymbol{a}\|^2\,\Delta t.
\label{eq:turn_accel_energy}
\end{equation}

This cost directly penalizes the turning acceleration induced by the change in unit direction.
As the turning angle increases, the directional difference
$\|\boldsymbol{u}_{\mathrm{out}}-\boldsymbol{u}_{\mathrm{in}}\|$
also increases, yielding a larger penalty.
Therefore, this term suppresses frequent heading changes during the search stage, effectively reducing ``zigzag'' paths and improving trajectory smoothness.

In summary, the following composite cost function is adopted during parent selection and rewiring:
\begin{equation}
J(\boldsymbol{\xi})=
w_L \sum_{t=1}^{m} \frac{\ell_t}{d_{\mathrm{ref}}}
+
w_A\, A_e,
\label{eq:total_cost_taes}
\end{equation}
where $\ell_t$ denotes the Euclidean distance between two adjacent sampled points, 
$d_{\mathrm{ref}}$ is a normalization constant, and $w_L$ and $w_A$ are constant weighting factors.

Accordingly, when selecting a candidate parent node ${\boldsymbol{\xi}}_{p}$ for the new node ${\boldsymbol{\xi}}_{\mathrm{new}}$, the incremental cost can be computed using only the parent of ${\boldsymbol{\xi}}_{p}$, i.e., ${\boldsymbol{\xi}}_{pp}$, thereby preserving the incremental computability of the overall cost.
During rewiring, only the turning-related term in the locally affected region needs to be updated, which maintains the online efficiency of RRT$^\ast$.

\subsection{Enhanced Tube-RRT$^\ast$ Framework}
Table~\ref{alg:tube_rrtstar_cn} presents the complete workflow of the proposed algorithm and provides the theorems regarding its probabilistic completeness and asymptotic optimality.

\begin{algorithm}[t]
\caption{Enhanced Tube-RRT* Planner}
\label{alg:tube_rrtstar_cn}
\begin{algorithmic}[1]
\Require space $\mathcal{X}$, start state $\boldsymbol{\xi}_{\mathrm{o}}$, goal state $\boldsymbol{\xi}_{\mathrm{g}}$
\Ensure $\mathcal{G}=(V,E)$
\State $V \gets \{\boldsymbol{\xi}_{0}\};\; E \gets \emptyset;\; \mathcal{G}=(V,E)$
\For{$t=1$ \textbf{to} $m$}
\State ${\boldsymbol{\xi}}_{\mathrm{rand}} \gets$ Sampling using Algorithm~\ref{alg:bass_srrtstar_cn}
\State ${\boldsymbol{\xi}}_{\mathrm{new}} \gets$ Tree expansion using Algorithm~\ref{alg:adaptive_expand}
    \If{$r_{\mathrm{new}} > r_{\min}$}
        \State $X_{\mathrm{near}} \gets \mathrm{NearConnect}(\mathcal{G}, \boldsymbol{\xi}_{\mathrm{new}})$
        \State $V \gets V \cup \{\boldsymbol{\xi}_{\mathrm{new}}\}$
        \State $E \gets \mathrm{Rewire}(\boldsymbol{\xi}_{\mathrm{new}}, X_{\mathrm{near}}, E)$
    \EndIf
\EndFor
\State \Return $\mathcal{G}=(V,E)$
\end{algorithmic}
\end{algorithm}

\begin{theorem}[Probabilistic Completeness of the Enhanced Tube-RRT*]
\label{thm:pc_tube_rrtstar_cn}
Let the initial state be ${\boldsymbol{\xi}}_0$ and the goal region ${\boldsymbol{\xi}}_g\subseteq \mathcal{X}_{\mathrm{free}}$.
If the planning problem admits a feasible solution, i.e., there exists at least one collision-free path $\sigma^\star:[0,1]\rightarrow \mathcal{X}_{\mathrm{free}}$ such that
$\sigma^\star(0)={\boldsymbol{\xi}}_0$ and $\sigma^\star(1)={\boldsymbol{\xi}}_g$,
then as the number of samples $m\to\infty$, the probability that the Enhanced \textsc{Tube-RRT*} algorithm finds a feasible solution converges to $1$.
\end{theorem}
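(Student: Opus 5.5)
The plan is to reduce the claim to the standard probabilistic-completeness argument for RRT/RRT$^\ast$ (ball-covering of a clearance-inflated feasible path). The one new ingredient is a uniform lower bound on the probability that an iteration extends the tree into the next ball. Such a bound survives the Bayesian active sampling and the adaptive, potential-guided extension, because both modifications retain a fixed-probability ``plain RRT'' branch.

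\textbf{Step 1: discretize a robust path.} First assume, as is standard, that $\sigma^\star$ has strong clearance $\delta>0$ with respect to the obstacles inflated by the system volume and the tube radius $r_{\min}$. Then cover $\sigma^\star$ by a finite sequence of balls $B_0,\dots,B_M$ of radius $\rho$. Choose $\rho<\min\{\delta/4,\ \epsilon_{\min}/4\}$, where
\[
\epsilon_{\min}=\epsilon_0\min\{\tau+(1-\tau)e^{-\beta},\,1-S_{\lambda,\max}\}-\tfrac{\log 2}{\alpha}>0
\]
is a lower bound on the soft-min step size, with parameters chosen so that it is positive. Place the ball centers along $\sigma^\star$ at spacing smaller than the minimum step. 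Then any node in $B_j$, expanded toward any point in a suitable target ball $B_{j+1}'$, produces a collision-free node in $B_{j+1}$ with tube radius exceeding $r_{\min}$. The soft-min is bounded above by $\min\{\epsilon_1,\epsilon_2\}\le 2\epsilon_0$ and below as above, so the step is confined to a known interval. The ball spacing can therefore be matched to it, for instance by letting the next target be the point of $\sigma^\star$ at distance $\epsilon$ from the current node, with the goal ball treated last via the goal bias.

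\textbf{Step 2: uniform per-iteration success probability.} Suppose the tree already contains a node in $B_j$. Consider the event that the iteration draws a uniform sample, which has probability $\epsilon_u$ by \eqref{eq:sampling_mixture_cn}, and that this sample lands in a small region $S_j$ whose nearest tree node lies in $B_j$ and which steers into $B_{j+1}$. This event has probability at least $\epsilon_u\,\mu(S_j)/\mu(\mathcal{X})$. On it, the fallback branch \eqref{eq:fallback_cn} selects $\boldsymbol{d}=\boldsymbol{u}_{\mathrm{rand}}$ with probability $p_e$. Hence the success probability is at least $q:=\epsilon_u p_e\min_j\mu(S_j)/\mu(\mathcal{X})>0$, independently of the Beta posteriors, the potential field, and the tree history. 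The Thompson-sampling branch and the fused direction $\tilde{\boldsymbol{d}}$ only add probability mass, so no property of the posterior updates \eqref{eq:beta_update} is needed.

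\textbf{Step 3: rewiring is harmless.} NearConnect and Rewire in Algorithm~\ref{alg:tube_rrtstar_cn} never delete vertices. They only change parents among collision-free, tube-feasible edges, and the composite cost \eqref{eq:total_cost_taes} only influences which feasible edge is chosen. Consequently the vertex set is monotone, and the connectivity of the tree to $\boldsymbol{\xi}_0$ is preserved.

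\textbf{Step 4: conclude.} Reaching $B_M$ requires $M$ successes of Bernoulli-type trials, each with conditional success probability at least $q$. The number of successes in $m$ iterations therefore stochastically dominates a $\mathrm{Binomial}(m,q)$ variable, and so
\[
\Pr(\text{failure after } m)\le \sum_{k<M}\binom{m}{k}q^k(1-q)^{m-k}\to 0 .
\]
The goal itself is then connected using the goal-bias mass $(1-\epsilon_u)p_g$.

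\textbf{Main obstacle.} The delicate step is Step 1 combined with Step 2: showing that the variable step size and the nearest-node selection still guarantee progress from $B_j$ to $B_{j+1}$. Unlike the fixed-$\epsilon_0$ RRT, here $\epsilon$ depends on the turning angle and on the local obstacle density. I would handle this by using only the lower and upper bounds on $\epsilon$, and by choosing $S_j$ as a thin cone around the path tangent, far enough away that $\boldsymbol{\xi}_{\mathrm{near}}$ is forced to lie in $B_j$. If that fails, I would fall back to assuming the clearance $\delta$ is large relative to $2\epsilon_0$.
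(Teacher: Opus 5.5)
There is a genuine gap, at exactly the step you flag as the main obstacle. The thin-cone remedy fails. The large-clearance fallback is necessary, but it is not sufficient unless you also change how progress is measured.

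Steps 1--2 need a fixed set $S_j$, with $\mu(S_j)$ bounded below independently of the tree, such that a sample in $S_j$ has its nearest node in $B_j$ and yields a new node in $B_{j+1}$. Neither requirement holds.

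\emph{Nearest node.} Membership $\boldsymbol{\xi}_{\mathrm{near}}\in B_j$ cannot be guaranteed with a history-independent bound. Other tree nodes may lie arbitrarily close to any candidate region, so the part of the Voronoi cell of your node $v_j\in B_j$ facing $B_{j+1}$ has no uniform lower bound. Pushing $S_j$ far out along a cone only enlarges the ball $\{z:\|z-\boldsymbol{\xi}_{\mathrm{rand}}\|<\|v_j-\boldsymbol{\xi}_{\mathrm{rand}}\|\}$ in which competing nodes can sit.

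\emph{Landing point.} By \eqref{eq:tree_expand}, $\boldsymbol{\xi}_{\mathrm{new}}$ lies at distance exactly $\epsilon$ from $\boldsymbol{\xi}_{\mathrm{near}}$, with no truncation at the sample. Here $\epsilon\in[\epsilon_{\min},\epsilon_0)$, since $\epsilon<\min\{\epsilon_1,\epsilon_2\}\le\epsilon_0$, and it varies with turning angle and density. A ball of radius $\rho<\epsilon_{\min}/4$ cannot catch a landing point whose distance from $v_j$ is uncertain over an interval of length up to $\epsilon_0-\epsilon_{\min}$. The missing truncation also removes the device standard RRT proofs use to ignore which node is nearest: sample in $B_{j+1}$, the nearest node is within one step, so the new node \emph{is} the sample.

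\emph{Small clearance.} Your argument would give completeness for every clearance $\delta>0$, which is false. In your model each new node needs a collision-free segment of length at least $\epsilon_{\min}$ from an existing node. Such a tree cannot pass a corridor wider than $2r_{\min}$ whose width and straight runs are small compared with $\epsilon_{\min}$. A clearance hypothesis tied to $\epsilon_0$ must therefore be part of the statement. Steps 3 and 4 are fine.

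\textbf{A repair that keeps your probabilistic ingredients.} Take points $c_0=\boldsymbol{\xi}_0,\ldots,c_M=\boldsymbol{\xi}_g$ on $\sigma^\star$ with spacing at most $h$, and let $\nu<\epsilon_{\min}/2$. Assume every point within distance $R\ge\epsilon_0+3\nu+h$ of $\sigma^\star$ has clearance larger than $r_{\min}$. Measure progress by $\phi_j=\min_{v\in V}\|v-c_j\|$ instead of by where the nearest node lies.

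Suppose $\phi_j\le\epsilon_0+\nu<\phi_{j+1}$. Draw a uniform sample $x\in B(c_{j+1},\nu)$ together with the fallback direction. Whichever node $v$ is nearest then satisfies $\epsilon_0<\|x-v\|\le\phi_{j+1}+\nu$. Hence $\boldsymbol{\xi}_{\mathrm{new}}$ lies on the segment from $v$ to $x$ inside $B(c_{j+1},R)$, passes the tube test, and lowers $\phi_{j+1}$ by at least $\epsilon_{\min}-2\nu$.

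This uses only your history-independent probability $\epsilon_u p_e\,\mu(B(c_{j+1},\nu))/\mu(\mathcal{X})$ and the two-sided bound on $\epsilon$. It caps the number of required successes at $M\lceil h/(\epsilon_{\min}-2\nu)\rceil$, after which your binomial tail bound finishes the proof. The guarantee stops at distance $\epsilon_0+\nu$ from the target, so the goal must be a region containing a ball of that radius. Goal-biased samples at the point $\boldsymbol{\xi}_g$ do not create a node there under fixed-length steps.

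\textbf{Relation to the paper.} Appendix~A does not handle this step either. It postulates a minimum expansion success probability $p_l$ and passes directly to $\mathbb{P}(\boldsymbol{\xi}_{\mathrm{new}}\in A)\ge\rho\,\mu(A)$, which assumes away precisely the nearest-node and step-length issue. It then only adjusts $r_n$ and defers to the standard RRT$^\ast$ framework. Your use of the $\epsilon_u$ and $p_e$ branches as history-independent probability mass is the right replacement for that assumption; the missing piece is the geometric progress argument above.
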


\begin{proof}
See Appendix~A.
\end{proof}

\begin{theorem}[Asymptotic Optimality of the Enhanced Tube-RRT*]
\label{thm:pc_tube_rrtstar_ao}
Assume that a feasible solution exists.
As the number of samples $m\to\infty$, the composite cost function obtained by the proposed algorithm $J(\boldsymbol{\xi})$ converges almost surely to the optimal cost $J^\star(\boldsymbol{\xi})$.
\end{theorem}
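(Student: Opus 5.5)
The plan is to reduce Theorem~\ref{thm:pc_tube_rrtstar_ao} to the classical asymptotic-optimality argument of Karaman--Frazzoli for RRT$^\ast$. That argument needs four ingredients: (i) the sampling density is bounded below by a positive constant on $\mathcal{X}_{\mathrm{free}}$; (ii) the connection radius of \textsc{NearConnect} shrinks like $\gamma(\log |V|/|V|)^{1/3}$ with $\gamma$ above the usual threshold; (iii) the cost functional is additive, monotone, and continuous in the bounded-variation topology; (iv) the optimal solution has positive clearance, so that a robust optimal path $\sigma^\star$ with clearance $\delta>0$ exists. Following the standing convention of such proofs, I take (iv) as an assumption implicit in ``a feasible solution exists'', and, since the paper does not specify the radius in (ii), I take the standard shrinking radius as part of the algorithm. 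Probabilistic completeness from Theorem~\ref{thm:pc_tube_rrtstar_cn} (Appendix~A) is then used to guarantee that, almost surely, the tree eventually contains a feasible path and the cost sequence $J_m$ is well defined.

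\textbf{Step 1: density lower bound.} From the mixture \eqref{eq:sampling_mixture_cn}, $p(\boldsymbol{\xi})\ge \epsilon_u u(\boldsymbol{\xi}) = \epsilon_u/\mu(\mathcal{X})$ regardless of the Thompson-sampling state. The Beta posterior updates only change $q$, so they never destroy this bound. The samples are not i.i.d. because of the posterior adaptation, but conditionally on the past each draw still has density at least $\epsilon_u/\mu(\mathcal{X})$. This lets the ball-covering events be dominated by independent Bernoulli trials with success probability $\epsilon_u\mu(B)/\mu(\mathcal{X})$.

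\textbf{Step 2: steering.} The adaptive expansion \eqref{eq:tree_expand} alters direction and step. With probability $p_e$, however, $\boldsymbol{d}=\boldsymbol{u}_{\mathrm{rand}}$. The step $\epsilon$ is bounded below by $\epsilon_{\min}=\epsilon_0\min\{\tau,1-S_{\lambda,\max}\}-\log 2/\alpha$; I would require that this quantity be positive, or else modify the step to $\min\{\epsilon,\|\boldsymbol{\xi}_{\mathrm{rand}}-\boldsymbol{\xi}_{\mathrm{near}}\|\}$. Once the tree is dense, a sample in a small ball is therefore reached exactly with probability at least $p_e\epsilon_u$ times the ball measure. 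So every effective sample behaves like an RRT$^\ast$ sample with a thinned rate, and thinning only changes constants in the $\gamma$ threshold.

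\textbf{Step 3: cost structure.} The length term in \eqref{eq:total_cost_taes} is additive and continuous. The turning term $A_e$ depends on consecutive triples via \eqref{eq:turn_accel2}, so I would treat $J$ as a path cost depending on discrete curvature. Then I would show that along the Karaman--Frazzoli ball sequence $B_{m,1},\dots,B_{m,M_m}$ placed on $\sigma^\star$, the sum of $A_e$ converges to the corresponding turning cost of $\sigma^\star$. This is where the fixed-$\Delta t$ surrogate is delicate: the unit-vector differences scale like the segment length times curvature, so $\|\boldsymbol{a}\|^2\Delta t$ summed over $M_m$ edges tends to zero for smooth $\sigma^\star$ unless normalized. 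I would therefore interpret $J^\star$ as the infimum over paths of the limiting surrogate, and prove both a lower-semicontinuity bound $\liminf J_m\ge J^\star$, which holds trivially since every tree path is a feasible path, and an upper bound $\limsup J_m\le J(\sigma_\delta)+\varepsilon$ for near-optimal smooth $\sigma_\delta$.

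\textbf{Step 4: Borel--Cantelli.} The upper bound follows by showing that the probability that some ball in the sequence contains no tree vertex is summable in $m$. Rewiring within radius $r_m$ ensures that consecutive ball vertices are connected with cost at most that of the straight chain. For the turning term, rewiring must also update descendants' costs; I would note that \textsc{Rewire} propagates these through $\boldsymbol{\xi}_{pp}$ locally, as stated in Section~\ref{sec:composite_cost_cn}. Borel--Cantelli then gives $\limsup J_m\le J^\star+\varepsilon$ a.s. for every $\varepsilon$, which with Step 3 yields almost-sure convergence.

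\textbf{Main obstacle.} I expect the main obstacle to be Step 3. Because the turning penalty is not a function of edges alone, RRT$^\ast$ parent selection is not exactly a dynamic-programming optimum over the near set, and the discrete curvature surrogate needs the continuity argument above. I would first try to bound the turning term by a Lipschitz function of the chain's deviation from $\sigma^\star$, using the ball radii.
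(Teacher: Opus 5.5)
Your reduction is the same as the paper's. Appendix~A supplies the $\epsilon_u$-density bound, an \emph{assumed} expansion-success probability $p_l$, and the density-corrected radius \eqref{eq:rn_cn}. Appendix~B then cites Karaman--Frazzoli on the bare assertion that $J$ is ``continuous under path perturbations and additive along trajectory segments.'' You are right that this is false for the turning term, so the paper gives you nothing at the point you call the main obstacle. Your Steps~3--4 do not close it either.

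\emph{The upper bound fails.} The Lipschitz-in-deviation idea cannot give $\limsup J_m\le J(\sigma_\delta)+\varepsilon$. In the Karaman--Frazzoli chain, each vertex lies anywhere in a ball of radius $\beta q_m$ around centers spaced $\theta q_m$ apart along $\sigma_\delta$, with $q_m\propto r_m$. The heading error per hop is therefore of order $\beta/\theta$ and does not shrink with $m$. With the unit directions declared after \eqref{eq:uout_def}, each vertex is charged $4v_{\mathrm{ref}}^2\sin^2(\vartheta/2)/\Delta t$ regardless of edge length. The chain has $M_m\sim L/(\theta q_m)\to\infty$ vertices, with $L$ the length of $\sigma_\delta$. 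Edges are $O(r_m)$ because of your truncated steer, which is what lands new vertices on samples; positivity of $\epsilon_{\min}$ alone makes them overshoot. So the chain's turning cost diverges instead of tending to the essentially zero turning cost of a smooth $\sigma_\delta$. Letting $\beta=\beta_m\to 0$ does not help: keeping the small balls occupied with non-vanishing probability needs $m(\beta_m q_m)^3\gtrsim 1$, i.e.\ $\beta_m\gtrsim(\log m)^{-1/3}$, and then $M_m\beta_m^2\sim m^{1/3}/\log m\to\infty$. Under the literal raw differences \eqref{eq:uin_def}--\eqref{eq:uout_def}, second differences are $O(q_m)$ and the chain's turning sum is $O(q_m)\to 0$. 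You must therefore fix the normalization before Step~3 is even well posed; with the unit convention this route fails and the statement itself looks doubtful.

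\emph{The cost comparison in Step~4 fails.} The claim that rewiring connects consecutive ball vertices at cost no more than the chain is the Bellman inequality $c(y_{j+1})\le c(y_j)+c(y_j,y_{j+1})$. That inequality requires the edge cost to be independent of how $y_j$ was reached. Here the cost of $(y_j,y_{j+1})$ depends on the tree parent of $y_j$, which was chosen to minimize $c(y_j)$, not the continuation through $y_{j+1}$. Hence the tree cost is not bounded by the chain cost. Moreover, rewiring a vertex changes the turning terms of its children, so vertex costs are not monotone non-increasing in $m$, which the Karaman--Frazzoli argument also uses. Updating costs locally through $\boldsymbol{\xi}_{pp}$ is bookkeeping and restores neither property. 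You name this obstacle but give no mechanism for it. The natural repairs both change what is being proved: search over position--heading pairs, where the cost becomes edge-additive, or restrict the optimality claim to the length term.
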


\begin{proof}
See Appendix~B.
\end{proof}

\textbf{Remark 1}
\emph{Let $m$ denote the number of nodes sampled by the Enhanced Tube-RRT* algorithm.
The computational complexity remains $O(m(\log m + 1))$ \cite{10844529}.
Therefore, the proposed planner can be applied to efficiently generate payload homotopy paths in large-scale environments.
In such cases, the computational complexity becomes
\[
O(m(\log m + 1) + l),
\]
where $l$ denotes the number of homotopy paths.}

\textbf{Remark 2}\emph{
$r_{\mathrm{new}}$ denotes the maximum collision-free radius at the sampled 
configuration, i.e., the distance to the nearest obstacle. 
$r_{\min}$ is the minimum tube radius determined by the physical size of the 
multi-UAV transportation system. 
The condition $r_{\mathrm{new}}>r_{\min}$ in Algorithm~\ref{alg:tube_rrtstar_cn} guarantees that the local free 
space can accommodate the system body without collision.}

\section{Trajectory Optimization}
In this section, the discrete path points generated in the previous section are smoothed.
The payload trajectory in 3-D space is parameterized using piecewise polynomials \cite{783649231210012}.
Meanwhile, the cable force vectors are jointly optimized at discrete sampling points such that the dynamic equilibrium, tension upper bound, and maximum tilt-cone constraints are satisfied.
High-order derivative costs are introduced to ensure trajectory smoothness, while a set of convex regularization terms is employed to obtain smooth, balanced, and feasible cable-force profiles.

First, the payload position trajectory $\boldsymbol{\xi}(t)\in\mathbb{R}^3$ is represented by an $n$-th order piecewise polynomial with $S$ segments:
\begin{equation}
\label{eq:ps_poly_cn}
\boldsymbol{\xi}_s(t)
= \sum_{j=0}^{n} \boldsymbol{c}_{s,j}\,(t-t_{s-1})^{j},
\qquad t\in[t_{s-1},t_s],
\end{equation}
where $\boldsymbol{c}_{s,j}$ denotes the $j$-th polynomial coefficient of the $s$-th segment,and stacking all segment coefficients $\boldsymbol{c}_{s,j}\in\mathbb{R}^3$ yields the overall trajectory parameter vector
$\boldsymbol{c} =
[\boldsymbol{c}_{1,0}^\top,\boldsymbol{c}_{1,1}^\top,\dots,\boldsymbol{c}_{1,n}^\top,
\boldsymbol{c}_{2,0}^\top,\dots,\boldsymbol{c}_{S,n}^\top]^\top$.

The trajectory smoothness objective is defined as
\begin{equation}
\label{eq:Jtraj_cn}
J_{\text{traj}}
=
\frac{1}{2}\int_{t_0}^{t_S}
\left\|
\frac{d^{r}\boldsymbol{\xi}(t)}{dt^{r}}
\right\|^{2}\,dt
=
\frac{1}{2}\boldsymbol{c}^\top \mathbf{Q}\boldsymbol{c},
\end{equation}
where $\mathbf{Q}$ is the corresponding positive semidefinite cost matrix determined by the $r$-th order derivative in the smoothness objective.
Considering the relationship between the system control inputs and higher-order trajectory derivatives, $r\ge5$ is required to ensure smooth control commands.
In addition, the equality constraint $\mathbf{A}_{\mathrm{eq}}\boldsymbol{c}=\boldsymbol{b}_{\mathrm{eq}}$ must be satisfied, corresponding to waypoint and continuity constraints:
\begin{equation}
\label{eq:waypoint_continuity_cn}
\begin{aligned}
& \boldsymbol{\xi}(t_s)=\boldsymbol{w}_s,\qquad s=0,\ldots,S.\\
& \frac{d^q\boldsymbol{\xi}_s(t)}{dt^q}\Big|_{t=t_s}
=
\frac{d^q\boldsymbol{\xi}_{s+1}(t)}{dt^q}\Big|_{t=t_s},\\
& \qquad q=0,\ldots,r-1,\qquad s=1,\ldots,S-1.
\end{aligned}
\end{equation}

Velocity and acceleration bounds are also imposed:
\begin{equation}
\label{eq:va_bounds_cn}
\begin{aligned}
\boldsymbol{v}_{\min} &\le \boldsymbol{v}_k \le \boldsymbol{v}_{\max},\\
\boldsymbol{a}_{\min} &\le \boldsymbol{a}_k \le \boldsymbol{a}_{\max}.
\end{aligned}
\end{equation}

Next, we introduce the decision variable $\boldsymbol{F}_{k,i}\in\mathbb{R}^3$, representing the tension vector transmitted from the $i$-th UAV to the payload at time $t_k$.
To suppress abrupt variations between adjacent sampling instants, both the magnitude and temporal variation of cable tensions are regularized:
\begin{align}
J_{\Delta F} &= \lambda_f\sum_{i=1}^{n_{C}}\sum_{k=1}^{N_s-1}\|\boldsymbol{F}_{k+1,i}-\boldsymbol{F}_{k,i}\|^2, \\
J_{F} &= \lambda_T\sum_{i=1}^{n_{C}}\sum_{k=1}^{N_s}\|\boldsymbol{F}_{k,i}\|^2,
\end{align}
where $N_s$ denotes the number of sampling instants, and $\lambda_f$ and $\lambda_T$ are positive weighting coefficients.

Assuming massless cables and considering only the translational dynamics of the payload, the force equilibrium constraint is
\begin{align}
\sum_{i=1}^{n_{C}}\boldsymbol{F}_{k,i}&=\boldsymbol{r}_k \label{eq:F_ki},\\
\boldsymbol{a}_k &= \mathbf{A}_{k}(t_k)\,\boldsymbol{c}.
\end{align}
Here, $\boldsymbol{r}_k = m_0(\boldsymbol{a}_k + \boldsymbol{g})$ denotes the required resultant force of the payload at time $t_k$, 
and $\mathbf{A}_{k}(t_k)$ denotes the linear mapping from the polynomial coefficient vector $\mathbf{c}$ to the acceleration $\boldsymbol{a}_k$.

An upper bound on the allowable cable tension $T_u$ is imposed:
\begin{equation}
\label{eq:tension_upper_cn}
\|\boldsymbol{F}_{k,i}\|\le T_u .
\end{equation}

Let $\alpha_u$ denote the maximum allowable tilt angle between the cable force direction and the vertical direction $\boldsymbol{e}_3$ in the inertial frame $\mathrm{I}$.
The following constraint is imposed:
\begin{equation}
\label{eq:tilt_cone_cn}
\boldsymbol{e}_3^\top \boldsymbol{F}_{k,i} \ge \cos\alpha_u \|\boldsymbol{F}_{k,i}\|,
\end{equation}
this constraint ensures that $\boldsymbol{F}_{k,i}$ lies within a second-order cone with axis $\boldsymbol{e}_3$ and half-angle $\alpha_u$, while also enforcing a positive vertical force component.

To reduce the risk of cable overload and improve force distribution balance, a tension-sharing regularization term is introduced:
\begin{equation}
\label{eq:J_share_cn}
J_{\text{share}}=
\lambda_{\text{share}}
\sum_{i=1}^{n_{C}}\sum_{k=1}^{N_s}
\left\|
\boldsymbol{F}_{k,i}-\frac{\boldsymbol{r}_k}{n_{C}}
\right\|^2 ,
\end{equation}
where $\lambda_{\mathrm{share}}$ is a positive weighting coefficient.

To account for payload attitude variations, a moment-tracking cost term is further introduced:
\begin{equation}
\label{eq:J_moment_cn}
J_{\text{moment}}
=
\lambda_M
\sum_{k=1}^{N_s}
\left\|
\boldsymbol{M}_k-\boldsymbol{M}^{\mathrm{ref}}_{d,k}
\right\|^2 ,
\end{equation}

\begin{equation}
\label{eq:M_k_def}
\boldsymbol{M}_k
=
\sum_{i=1}^{n_{C}}
\widehat{\boldsymbol{\rho}}_{i}
\Big(
(\boldsymbol{R}^{\mathrm{ref}}_{0d,k})^{\!\top}
\boldsymbol{F}_{k,i}
\Big),
\end{equation}

\begin{equation}
\label{eq:Md_ref_def}
\boldsymbol{M}^{\mathrm{ref}}_{d,k}
=
\boldsymbol{J}_0\,\dot{\boldsymbol{\Omega}}_{0d,k}
+
\widehat{\boldsymbol{\Omega}_{0d,k}}\boldsymbol{J}_0\,\boldsymbol{\Omega}_{0d,k} ,
\end{equation}
where $\lambda_{M}$ is a positive weighting coefficient;
The vector $\boldsymbol{M}_k\in\mathbb{R}^3$ represents the total moment acting on the payload generated by the distributed UAV forces at the $k$th prediction step,
The matrix $\boldsymbol{R}^{\mathrm{ref}}_{0d,k}\in\mathrm{SO}(3)$ represents the reference attitude of the payload at step $k$, whose computation follows the method proposed in \cite{7843619}. 
The desired moment $\boldsymbol{M}^{\mathrm{ref}}_{d,k}\in\mathbb{R}^3$ is determined from the reference rotational dynamics of the payload, where $\boldsymbol{\Omega}_{0d,k}\in\mathbb{R}^3$ is the desired payload angular velocity, and $\dot{\boldsymbol{\Omega}}_{0d,k}\in\mathbb{R}^3$ is the corresponding desired angular acceleration at step $k$. 

Based on the above considerations, we formulate the following convex quadratic program:
\begin{equation}
\label{eq:final_problem_cn}
\begin{aligned}
\min_{\boldsymbol{c},\boldsymbol{F}_{k,i}} \quad
& J_{\text{traj}} + J_{\Delta F} + J_{F} + J_{\text{share}} + J_{\text{moment}} \\
\text{s.t.}\quad
& \boldsymbol{A}_{\mathrm{eq}}\boldsymbol{c}=\boldsymbol{b}_{\mathrm{eq}},\\
& \boldsymbol{v}_{\min}\le \boldsymbol{A}^{(1)}_k\boldsymbol{c}\le \boldsymbol{v}_{\max},\\
& \boldsymbol{a}_{\min}\le \boldsymbol{A}^{(2)}_k\boldsymbol{c}\le \boldsymbol{a}_{\max},\\
& \sum_{i=1}^{n_{C}}\boldsymbol{F}_{k,i}=m_0(\boldsymbol{A}^{(2)}_k\boldsymbol{c}+\boldsymbol{g}),\\
& \|\boldsymbol{F}_{k,i}\| \le T_u,\\
& \boldsymbol{e}_3^\top \boldsymbol{F}_{k,i} \ge \cos\alpha_u\,\|\boldsymbol{F}_{k,i}\|.
\end{aligned}
\end{equation}

Problem~\eqref{eq:final_problem_cn} consists of a convex quadratic objective, linear equality/inequality constraints, and second-order cone (SOC) constraints, and can thus be solved efficiently using off-the-shelf convex optimization solvers.
In this work, we use \texttt{CVX} as the modeling interface and \texttt{MOSEK} as the underlying solver.

The feasibility of \eqref{eq:final_problem_cn} is primarily determined by the force equilibrium constraint and the feasible set of cable tensions.
A feasible initialization is constructed as follows.
First, we ignore the cable constraints and solve for the minimum high-order derivative trajectory to obtain $r_k^{(0)}$, then an initial tension allocation is set as
\begin{equation}
\label{eq:F0_guess_cn}
\boldsymbol{F}_{k,i}^{(0)} \approx \frac{r_k^{(0)}}{n_{C}}.
\end{equation}

The direction of $\boldsymbol{F}_{k,i}^{(0)} $ is then projected onto the tilt-cone constraint to ensure consistency with the angle bound.
This initialization is also aligned with the tension-sharing soft constraint in the objective, which reduces the probability of overloading a single cable and improves allocation balance when the constraints are satisfied.

Next, note from \eqref{eq:Md_ref_def} that if $\boldsymbol{R}^{\mathrm{ref}}_{0d,k}$ is determined online from the current trajectory, a nonconvex coupling would be introduced.
Therefore, we adopt a \emph{fixed-reference} strategy: $\boldsymbol{R}^{\mathrm{ref}}_{0,k}$ and $\boldsymbol{M}^{\mathrm{ref}}_{d,k}$ are computed from the initial trajectory $\boldsymbol{r}^{(0)}_k$, and the moment-tracking term is constructed as a convex quadratic function of $\boldsymbol{F}_{k,i}$, thereby preserving convexity in each solve.
When the discrepancy between the reference and the final trajectory becomes large, consistency can be improved by adjusting the weight $\lambda_M$ and/or updating the reference moment $\boldsymbol{M}^{\mathrm{ref}}_{d,k}$.
With the above procedure, the feasible set is nonempty; since the constraint set is closed and the objective is lower bounded, an optimal solution exists.

\textbf{Remark 3}
\emph{
Different from \cite{Chai2025TMechTrajPlan}, this work transforms an offline tension-allocation problem into an online trajectory optimization problem that explicitly enforces tension feasibility.
It is important to emphasize that, in a two-stage pipeline, if the smoothing stage does not explicitly account for the tension magnitude bound $T_u$ and the swing/tilt angle bound $\alpha_u$, the subsequent tension allocation can easily become infeasible---especially when $T_u$ and $\alpha_u$ are small or when large trajectory accelerations increase the required resultant force.
In such cases, there may not exist a set of tensions $\{\boldsymbol{F}_{k,i}\}$ satisfying \eqref{eq:F_ki}--\eqref{eq:tilt_cone_cn} simultaneously.
In contrast, by jointly optimizing the trajectory parameter vector $\boldsymbol{c}$ and the tension variables $\boldsymbol{F}_{k,i}$ during planning, 
the proposed method maintains a higher level of feasibility and executability even under tight tension constraints.}
\emph{
Moreover, the number of decision variables $n_x$ in \eqref{eq:final_problem_cn} is
\begin{equation}
\begin{aligned}
n_x &= \dim(\boldsymbol{c})+\dim(\boldsymbol{F})\\
&=3S(n+1) + 3N_s n_{C} .
\end{aligned}
\end{equation}
When the polynomial order $n$, the number of cables $n_{C}$, and the number of sampling instants $N_s$ are fixed, the problem size grows linearly with the number of segments $S$, 
which is well suited for repeated solves and online replanning in large-scale scenarios.}
\section{Trajectory Generation and Tracking}
To validate the effectiveness and feasibility of the proposed two-stage trajectory planning framework, 
this section compares the trajectory generation results of our method against competing baselines 
and evaluates trajectory tracking using a centralized geometric control strategy\cite{Gao2025RobustnessEF}.

\subsection{Trajectory Generation Results and Analysis}
We compare the proposed Enhanced Tube-RRT$^\ast$ with a mixed-sampling Tube-RRT$^\ast$ (STube-RRT$^\ast$) and an adaptive-extension Tube-RRT$^\ast$ (AETube-RRT$^\ast$).
The test environment has dimensions $50~\mathrm{m}\times 50~\mathrm{m}\times 15~\mathrm{m}$, 
with the start position $\boldsymbol{\xi}_{\mathrm{o}}$ at $(6,\,2,\,10)\,\mathrm{m}$ and the goal position $\boldsymbol{\xi}_{\mathrm{g}}$ at $(40,\,40,\,5)\,\mathrm{m}$.

Under the same iteration budget, ablation experiments were conducted using 50 random seeds.
Table~\ref{tab:param_setting} lists the key parameter settings used in the trajectory generation stage.

Fig.~\ref{fig:search_process_s1} illustrates a representative search process selected from the 50 trials (filtered based on the turning-angle sum and path length).
The thick red curve represents the final path, while the blue circles denote the safety radius.

\begin{table}[t]
  \label{tab:param_setting}
\centering
\small
\setlength{\tabcolsep}{4pt} 
\begin{tabularx}{\columnwidth}{l c Y}
\hline
Symbol & Value & Description \\
\hline
$\epsilon_u$ & 0.2 & Uniform mixture coefficient \\
$p_g$ & 0.2 & Goal-biased sampling probability \\
$\kappa_p$ & 10  & Prior strength of the controller guidance \\
$\epsilon_0$ & 3 & Local extension step size \\
$\beta$ & 2.5 & Turning-angle penalty weight \\
$\alpha$ & 8 & Soft-min shaping parameter \\
$d_0$ & 3 & Distance threshold at which the potential field becomes active \\
$p_e$ & 0.2 & Random expansion probability \\
$v_{\mathrm{ref}}$ & 1.0 & Reference speed (m/s) \\
$\Delta t$ & 0.01 & Minimum integration step size \\
$w_A$ & 1 & Acceleration cost weight \\
$w_L$ & 1 & Distance cost weight \\
$d_{\mathrm{ref}}$ & 2.3 & Scale-normalization constant (to be specified) \\
\hline
\end{tabularx}
\end{table}

Table~\ref{tab:4} compares different algorithms using four metrics, including the number of collision checks, the time to the first feasible solution $T_{\text{first}}$, the total turning angle, and the path length.
The results indicate that introducing the active sampling strategy helps guide the search toward the region containing feasible solutions more quickly, thereby reducing $T_{\text{first}}$ and improving the overall computational efficiency.
However, without the potential-field term to regulate local geometry and smoothness, the generated paths tend to exhibit strong zigzag behavior, reflected by larger turning-angle sums.

In contrast, incorporating the repulsive potential field significantly reduces the number of obstacle collision checks, preserves effective nodes, and suppresses extreme turning angles.
Nevertheless, due to the influence of potential-field guidance and step-size filtering, the search process becomes more conservative, resulting in a relatively longer search time.
Overall, the Enhanced Tube-RRT* produces paths with the smallest turning-angle sum and path length, achieving the highest path quality among the compared methods.

Table~\ref{tab:results_cn} further summarizes the overall performance across all trials.
The proposed method achieves a significantly higher success rate than both STube-RRT* and AETube-RRT*.
In terms of the effective sampling ratio, the proposed method significantly outperforms STube-RRT* and is comparable to AETube-RRT*.
Moreover, the proposed algorithm finds the first feasible solution faster, indicating that the adaptive expansion strategy increases the probability of valid expansions.
However, because the search tends to bias toward the interior of free space, the first feasible solution may occasionally appear slightly later.

To further evaluate solution quality, we analyze only the successful trials.
The proposed method yields a significantly shorter average path length than STube-RRT*, while the difference with AETube-RRT* is not statistically significant.
Regarding smoothness, the average turning-angle sum of the proposed method is lower than those of both baseline algorithms, indicating smoother trajectories.

To avoid selection bias caused by evaluating only successful samples, we introduce a penalized metric to incorporate failed trials into the overall utility evaluation.
For a metric $m_s$, we define
\begin{equation}
\tilde m_s =
\begin{cases}
m_s, & \text{if successful},\\
1.1 \times \max\limits_{s' \in \mathcal{S}_{\text{succ}}} m_{s'}, & \text{otherwise}.
\end{cases}
\end{equation}
where $\mathcal{S}_{\text{succ}}$ denotes the set of successful trials.
This formulation ensures that any failed run is strictly worse than all successful runs in the ranking.

The results show that, since both baseline algorithms fail in more than half of the random seeds, the median of their penalized metrics degenerates to the penalty values (path lengths of $10.25$ and $9.53$, and turning-angle sums of $25.34$ and $23.43$, respectively).
In contrast, the median values of the proposed method are still dominated by successful samples (path length $7.69$ and turning-angle sum $16.60$).

Overall, the Enhanced Tube-RRT* demonstrates clear advantages in path quality.
Under a comparable computational budget, the proposed method achieves shorter paths, higher success rates, and higher effective sampling ratios, while exhibiting smaller turning-angle sums and smoother trajectories.
These results verify that the proposed planner achieves an excellent balance between robustness and solution quality in dense obstacle environments.
\begin{figure*}[t]
\centering
\subfloat[Enhanced Tube-RRT$^\ast$]{\includegraphics[width=0.33\linewidth]{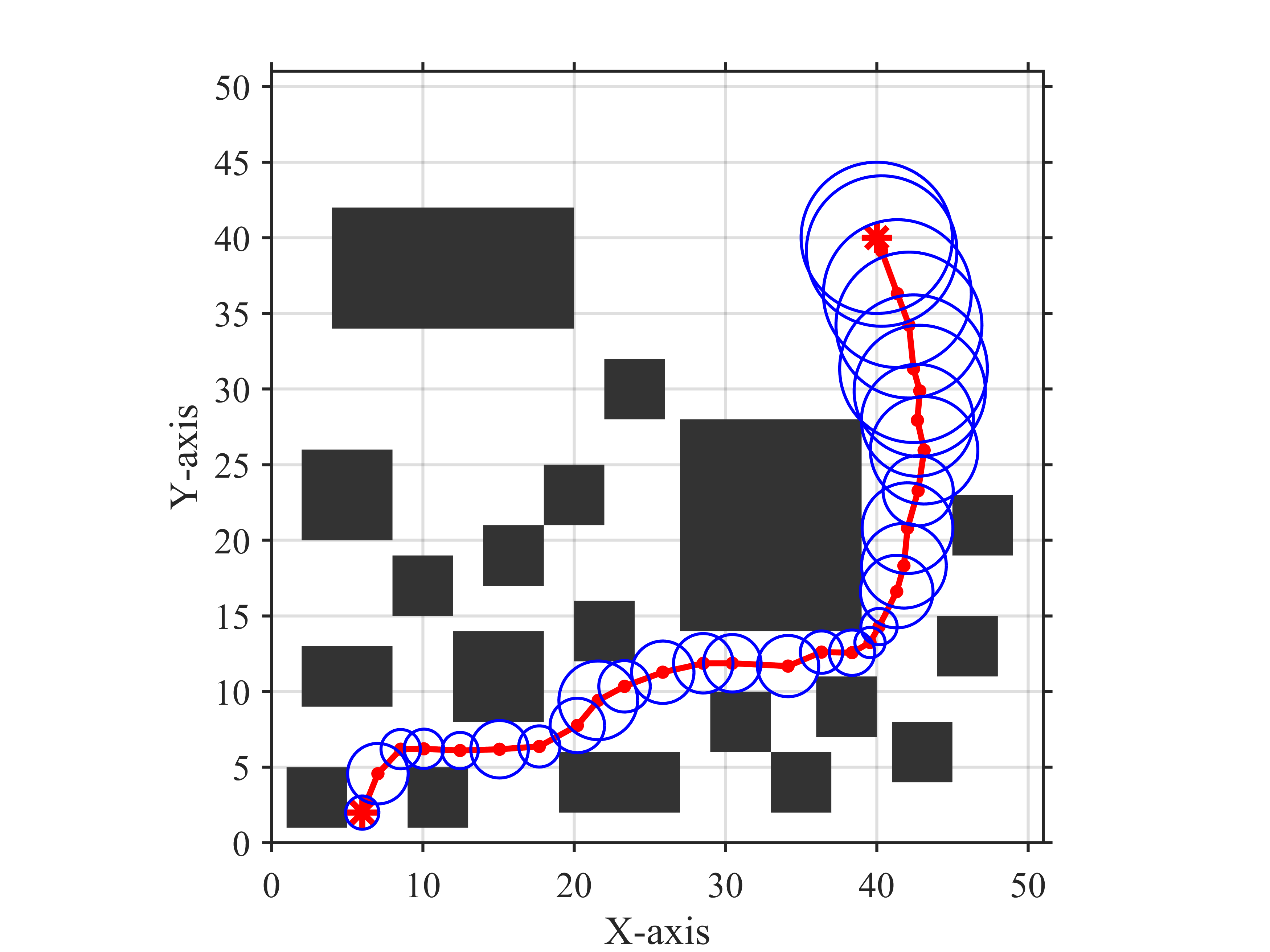}}
\subfloat[STube-RRT$^\ast$]{\includegraphics[width=0.33\linewidth]{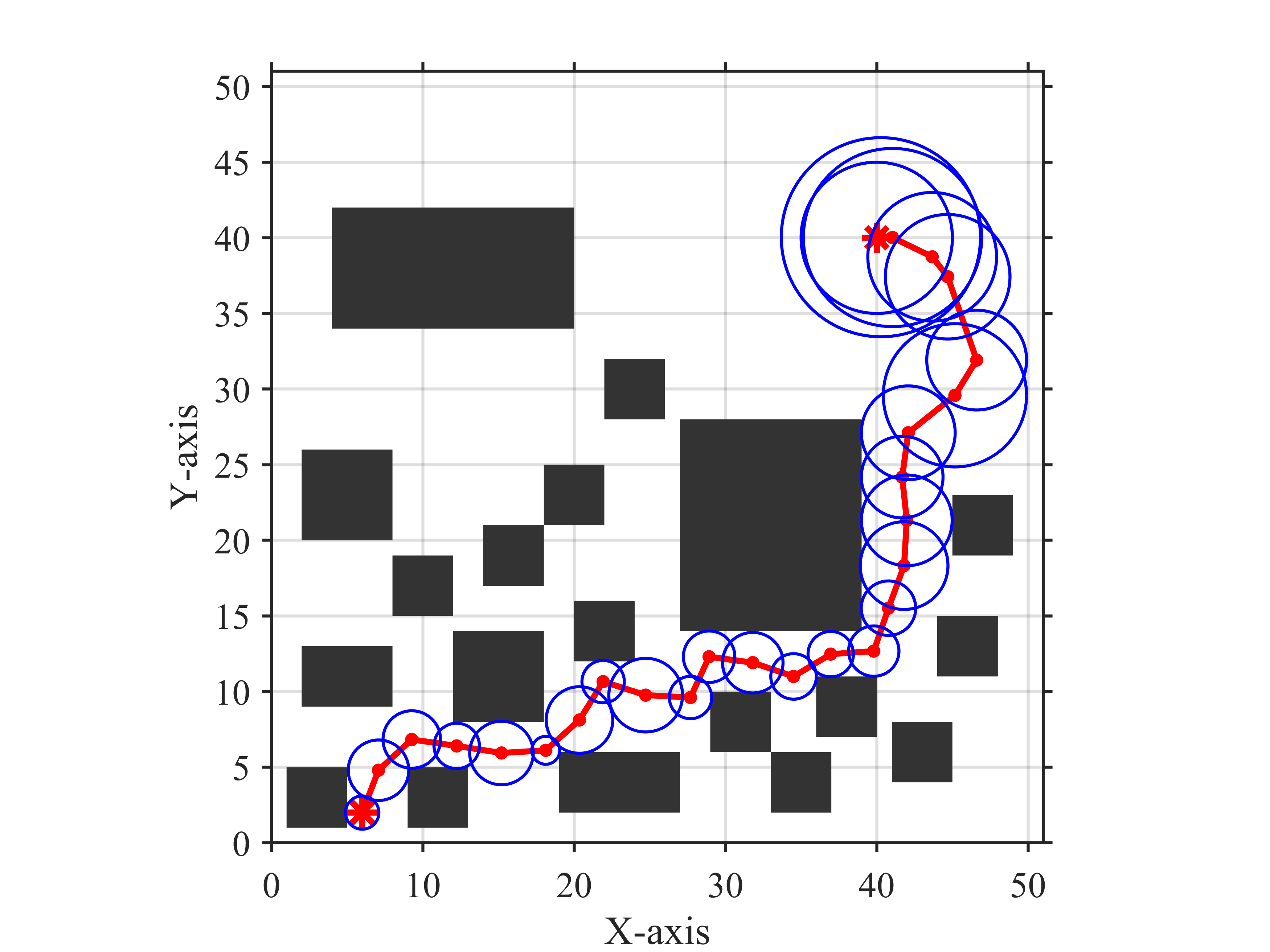}}
\subfloat[AETube-RRT$^\ast$]{\includegraphics[width=0.33\linewidth]{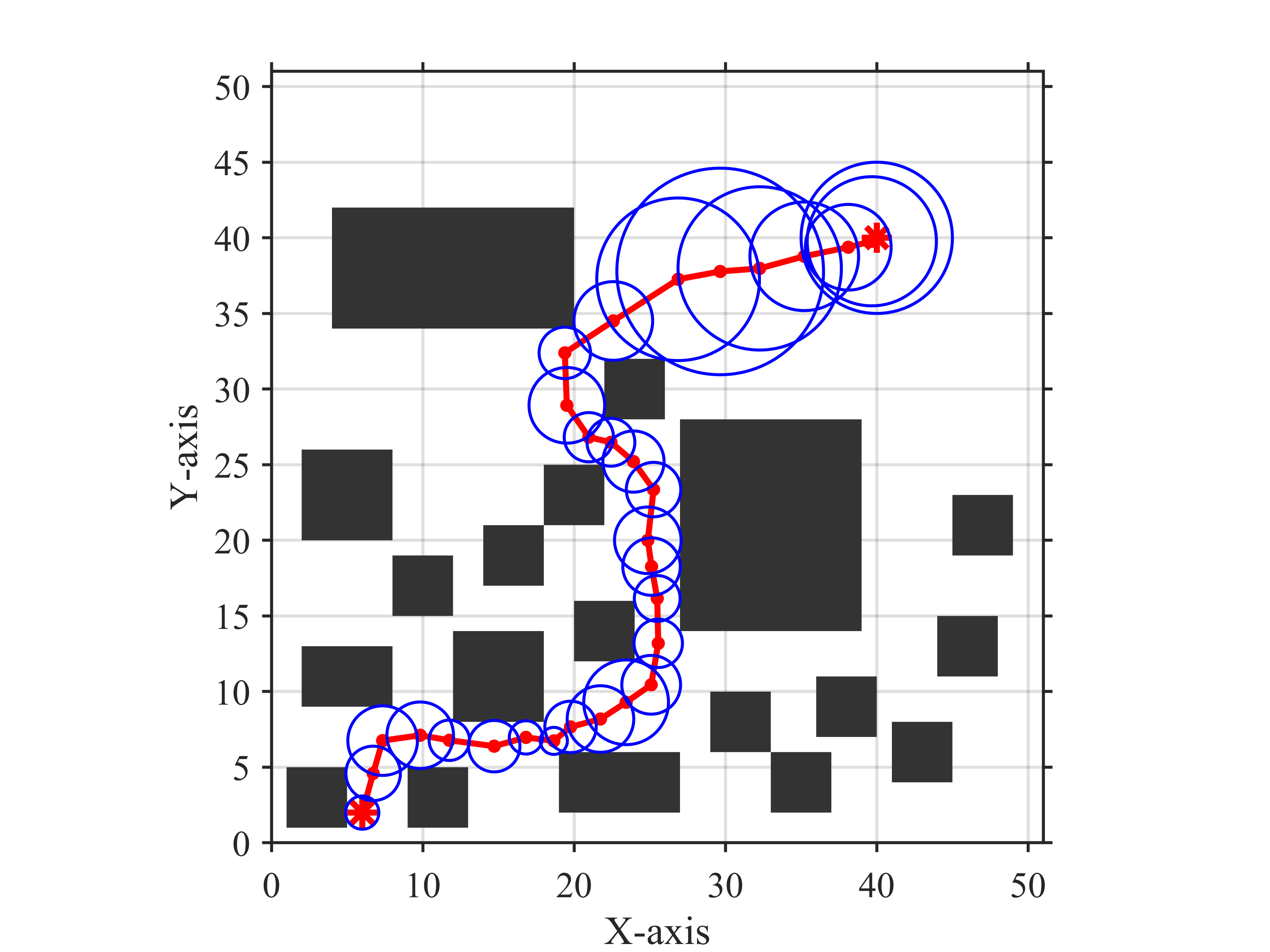}}
\caption{Comparison of paths generated by different algorithms. The thick red curve denotes the final path and the blue circles indicate the corresponding safety radii.
(a) shows the search result of the proposed Enhanced Tube-RRT$^\ast$, which generates a smooth trajectory with relatively uniform safety radii along the path.
(b) presents the path obtained by STube-RRT$^\ast$, where the mixed-sampling strategy accelerates the discovery of feasible regions but leads to noticeable zigzag behavior due to the lack of local geometric regulation.
(c) illustrates the result of AETube-RRT$^\ast$, where the adaptive extension guided by the potential field produces relatively smoother segments but may introduce larger safety radii and longer exploration near obstacles.}
\label{fig:search_process_s1}
\end{figure*}

\begin{table}[t]
\caption{Performance comparison in a representative search process.}
\label{tab:4}
\centering
\begin{tabular}{lccc}
\toprule
Metric & STube-RRT$^\ast$ & AETube-RRT$^\ast$ & Ours \\
\midrule
Collision checks & 3656 & 1752 & 1278 \\
$T_{\text{first}}$ (s)  & 2.66 & 13.88 & 1.13 \\
Turning-angle sum (rad) & 14.56 & 15.08 & 10.97 \\
Path length (m) & 7.75 & 7.57 & 7.27 \\
\bottomrule
\end{tabular}
\end{table}

\begin{table}[t]
\caption{Overall performance comparison across 50 random seeds.}
\label{tab:results_cn}
\centering
\footnotesize
\setlength{\tabcolsep}{4pt}
\renewcommand{\arraystretch}{0.9}
\begin{tabular}{lccc}
\toprule
Metric & STube-RRT$^\ast$ & AETube-RRT$^\ast$ & Ours \\
\midrule
Success rate (\%) & 36 & 28 & 94 \\
Effective sampling ratio (\%) & 39.08 & 52.17 & 51.83 \\
$T_{\text{first}}$ (s) & 1.55 & 1.76 & 1.11 \\
Turning-angle sum (rad) & 18.80 & 17.27 & 17.00 \\
Path length (m) & 8.37 & 7.87 & 7.72 \\
\bottomrule
\end{tabular}
\end{table}
\subsection{Trajectory Tracking Results}
In this subsection, the reference (optimal) trajectory is generated by the proposed two-stage planning pipeline described above, 
and the centralized geometric control scheme follows~\cite{Gao2025RobustnessEF}.
The system parameters and trajectory-optimization weights are summarized in Table~\ref{tab:param_cn}.

\begin{table}[t]
\caption{System parameters and trajectory-optimization weights.}
\label{tab:param_cn}
\centering
\small
\setlength{\tabcolsep}{4pt} 
\renewcommand{\arraystretch}{1.15} 
\begin{tabularx}{\columnwidth}{l c Y}
\hline
Symbol & Value & Description \\
\hline
$m_0~(\mathrm{kg})$ & 5 & Payload mass \\
$\mathbf{J}_0~(\mathrm{kg\,m^2})$ &
\makecell[c]{$\mathrm{diag}(0.6875,$\\$0.59375,$\\$0.78333)$} &
Payload inertia matrix \\
$m_i~(\mathrm{kg})$ & 1 & Mass of each quadrotor \\
$\mathbf{J}_1~(\mathrm{kg\,m^2})$ &
\makecell[c]{$\mathrm{diag}(0.065,$\\$0.030,$\\$0.100)$} &
Inertia of quadrotor~1 \\
$\mathbf{J}_2~(\mathrm{kg\,m^2})$ &
\makecell[c]{$\mathrm{diag}(0.052,$\\$0.035,$\\$0.095)$} &
Inertia of quadrotor~2 \\
$\mathbf{J}_3~(\mathrm{kg\,m^2})$ &
\makecell[c]{$\mathrm{diag}(0.045,$\\$0.048,$\\$0.075)$} &
Inertia of quadrotor~3 \\
$l_i~(\mathrm{m})$ & 0.75 & Cable length \\
$\lambda_f~$ & 10 & Weight on force-rate variation \\
$\lambda_T~$ & 1 & Weight on force magnitude \\
$\lambda_{\mathrm{share}}$& 1 & Weight on force distribution balance \\
$\lambda_M~$ & 0.1 & Weight on moment term \\
\hline
\end{tabularx}
\end{table}

\begin{figure}[htbp]
\centering
\begin{minipage}{0.48\linewidth}
\centering
\includegraphics[width=\linewidth]{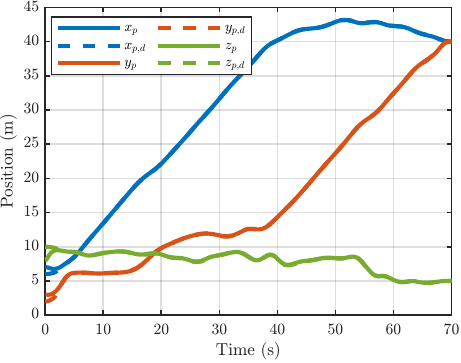}
\\ {\footnotesize (a) Load position tracking curve.}
\end{minipage}
\hfill
\begin{minipage}{0.48\linewidth}
\centering
\includegraphics[width=\linewidth]{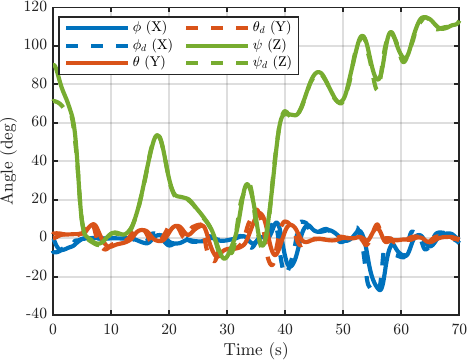}
\\ {\footnotesize (b) Load attitude tracking curve.}
\end{minipage}
\caption{Payload position and attitude tracking performance. 
The payload trajectory closely follows the reference trajectory throughout the mission. 
Although small deviations appear in the roll $\phi$ and pitch $\theta$ angles during aggressive turning, they rapidly converge to the desired values, demonstrating stable tracking performance of the proposed controller.}
\label{fig:PayloadTracking}
\end{figure}

\begin{figure*}[t]
\centering
\begin{minipage}[t]{0.48\textwidth}
\centering
\includegraphics[width=\linewidth,height=5cm,keepaspectratio]{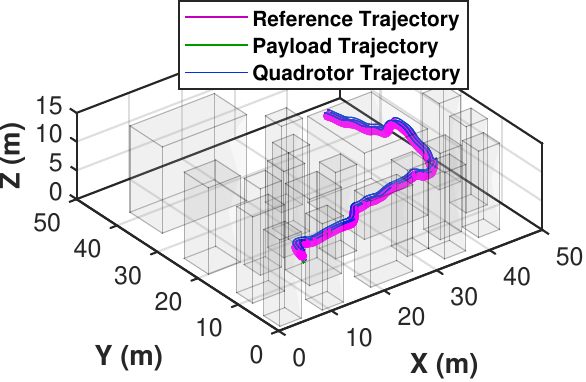}
\\{\footnotesize (a) Side view.}
\end{minipage}
\hfill
\begin{minipage}[t]{0.48\textwidth}
\centering
\includegraphics[width=\linewidth,height=5cm,keepaspectratio]{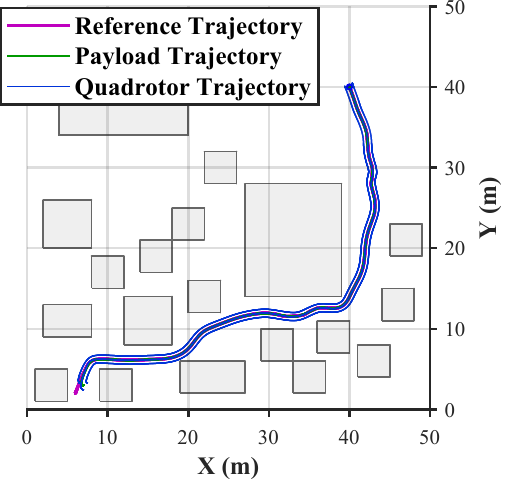}
\\{\footnotesize (b) Top view.}
\end{minipage}
\caption{Spatial trajectories of the UAV–payload cooperative transportation system. 
The gray cubes represent environmental obstacles. 
The trajectories remain collision-free with sufficient safety clearance, demonstrating the effectiveness of the proposed planning and control strategy.}
\label{fig:views}

\end{figure*}

The initial payload position is set to $\boldsymbol{\xi}_0=(7,\,3,\,8)\,\mathrm{m}$,
while the target position is identical to that specified in the previous subsection.
The cable tension bounds are defined as $T_l=1~\mathrm{N}$ and $T_u=35~\mathrm{N}$.
The angle between the cable direction and the vertical axis $\boldsymbol{e}_3$ of the inertial frame $\mathrm{I}$ is constrained by
$\alpha_l=0^{\circ}$ and $\alpha_u=22^{\circ}$.
The velocity and acceleration limits of the payload are set to
$v_{\max}=2~\mathrm{m/s}$ and $a_{\max}=1~\mathrm{m/s^2}$, respectively.

The payload position tracking results are shown in Fig.~\ref{fig:PayloadTracking}(a).
The dashed curve represents the reference trajectory generated by the proposed planner,
while the solid curve denotes the trajectory tracked by the controller.
The actual trajectory closely follows the reference trajectory throughout the entire task.
Stable tracking is maintained during acceleration, turning, and altitude variations,
indicating that the proposed controller can effectively track the planned trajectory while satisfying the feasibility requirements for smooth payload transportation.
\begin{figure*}[!t]
    \centering
    \includegraphics[width=0.8\textwidth,height=0.8\textheight,keepaspectratio]{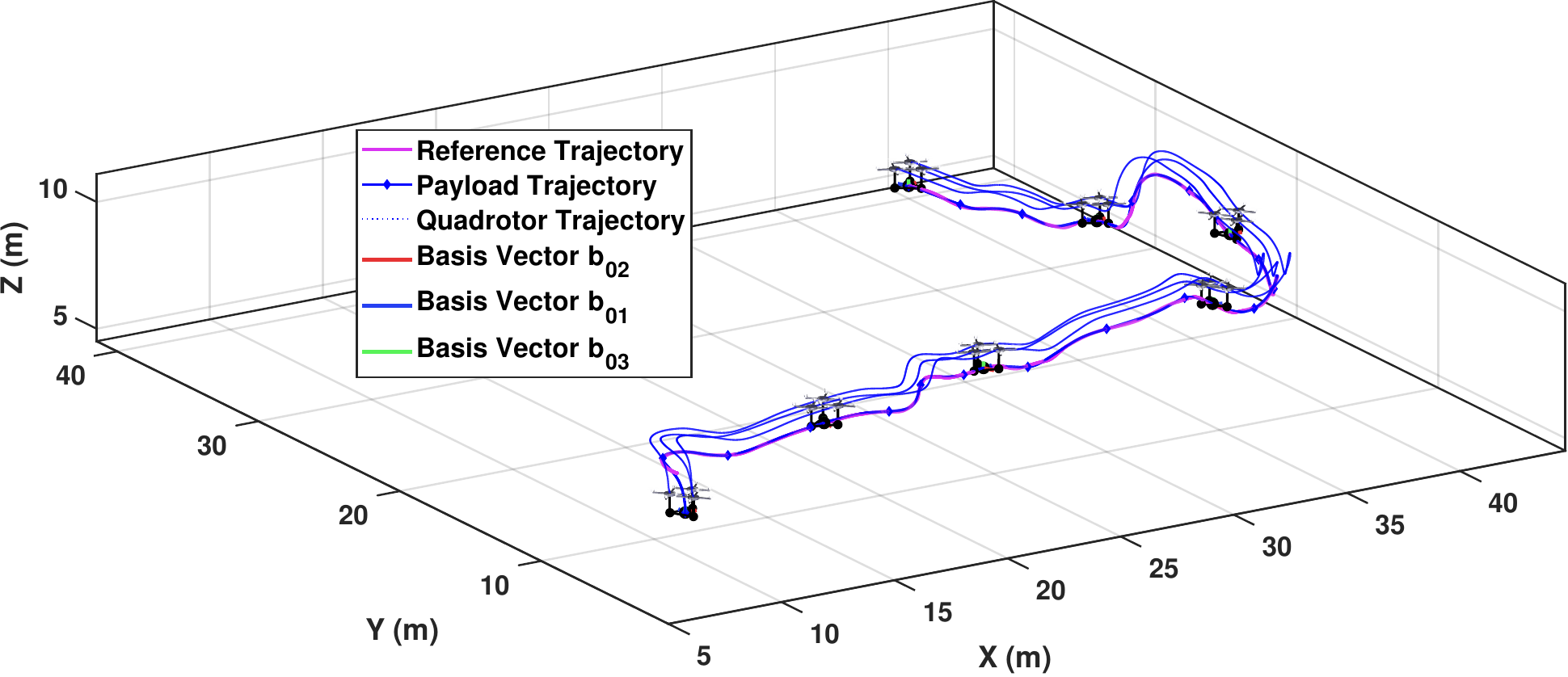}
    \caption{Visualization of the simulated trajectory tracking performance of the proposed trajectory planning and cascaded control framework. 
    The reference trajectory, payload trajectory, and quadrotor trajectory are illustrated together with the payload body-frame basis vectors. 
    The results show that the payload closely follows the reference path while the quadrotor maintains a smooth motion profile. 
    The oscillations of the suspended payload are effectively suppressed throughout the maneuver, demonstrating the effectiveness of the proposed method in achieving accurate trajectory tracking and stable transportation of suspended loads.}
    \label{fig:time_evolution}
\end{figure*}

\begin{figure*}[t]
\centering
\begin{minipage}[t]{0.49\textwidth}\centering
  {\bfseries a}\par\vspace{2pt}
  \includegraphics[width=\textwidth]{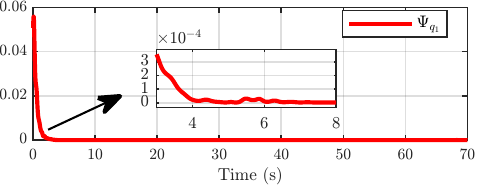}\par\vspace{4pt}
  \includegraphics[width=\textwidth]{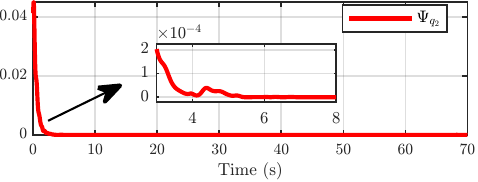}\par\vspace{4pt}
  \includegraphics[width=\textwidth]{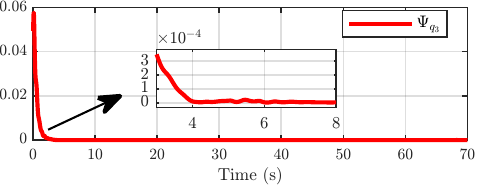}
\end{minipage}\hfill
\begin{minipage}[t]{0.49\textwidth}\centering
  {\bfseries b}\par\vspace{2pt}
  \includegraphics[width=\textwidth]{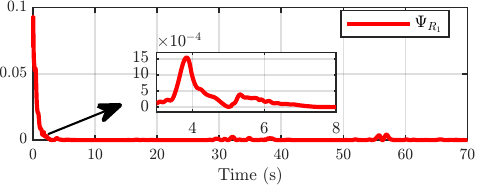}\par\vspace{4pt}
  \includegraphics[width=\textwidth]{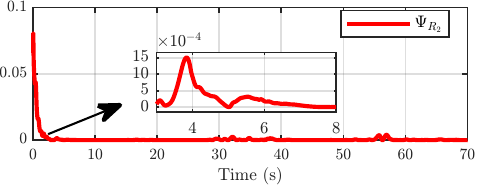}\par\vspace{4pt}
  \includegraphics[width=\textwidth]{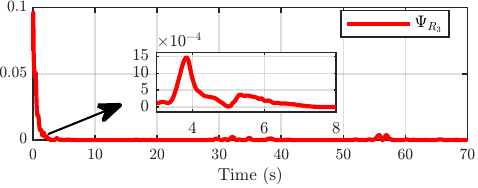}
\end{minipage}
\caption{Configuration errors during trajectory tracking. 
(a) Cable direction configuration errors. 
(b) UAV attitude configuration errors. 
Both errors converge to the order of $10^{-4}$ within approximately $3\,\mathrm{s}$, 
and only small bounded oscillations remain in steady state, demonstrating geometric consistency and effective suppression of coupled oscillations.}
\label{fig:search_process_s2}
\end{figure*}

\begin{figure*}[t]
\centering
\begin{minipage}[t]{0.49\textwidth}\centering
  {\bfseries a}\par\vspace{2pt}
  \includegraphics[width=\textwidth]{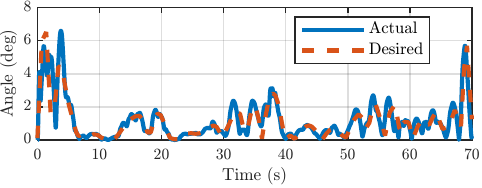}\par\vspace{4pt}
  \includegraphics[width=\textwidth]{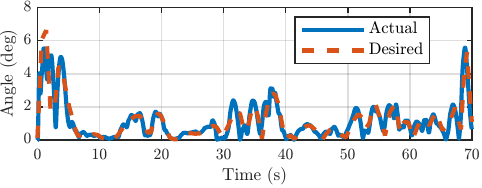}\par\vspace{4pt}
  \includegraphics[width=\textwidth]{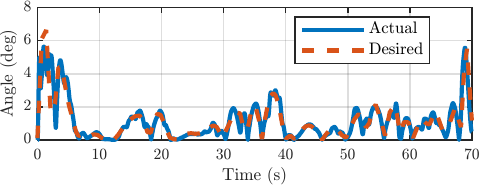}
\end{minipage}\hfill
\begin{minipage}[t]{0.49\textwidth}\centering
  {\bfseries b}\par\vspace{2pt}
  \includegraphics[width=\textwidth]{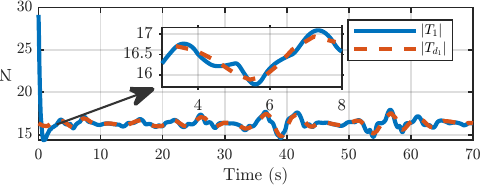}\par\vspace{4pt}
  \includegraphics[width=\textwidth]{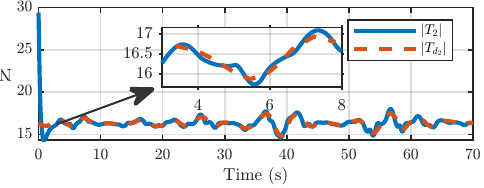}\par\vspace{4pt}
  \includegraphics[width=\textwidth]{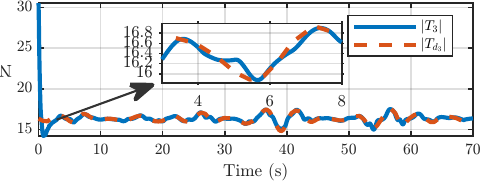}
\end{minipage}
\caption{Rope sway angle and tension amplitude variation curve during trajectory optimization and tracking.
(a) Rope swing angle variations of the three cables. 
(b) Cable tension magnitudes. 
The swing angles remain within prescribed bounds, while the tensions vary smoothly without slack, 
indicating stable force distribution and reduced payload oscillation.}
\label{fig:search_process_s3}
\end{figure*} 

Fig.~\ref{fig:views} illustrates the spatial motion of the UAV–payload cooperative transportation system.
The gray cubes represent environmental obstacles.
From both the top and side views, it can be observed that the UAV–payload system maintains sufficient safety clearance from obstacles throughout the mission.
No obvious trajectory segments approach or penetrate obstacle boundaries,
demonstrating that the proposed planning and control strategy can effectively satisfy spatial constraints while ensuring trajectory feasibility.

The payload attitude tracking results are presented in Fig.~\ref{fig:PayloadTracking}(b).
The roll angle $\phi$ and pitch angle $\theta$ exhibit slightly increased deviations during aggressive turning segments but quickly return to the desired values.

The configuration errors of the cable direction and UAV attitude are shown in Fig.~\ref{fig:search_process_s2}.
Both errors converge to the order of $10^{-4}$ within approximately $3\,\mathrm{s}$.
The zoomed-in view indicates that only small bounded oscillations remain during the steady-state phase.
These results demonstrate that the cable direction and UAV attitude maintain geometric consistency during trajectory tracking.
The corresponding visualization is provided in Fig.~\ref{fig:time_evolution}, which further confirms that the cascaded control structure effectively suppresses coupled oscillations of the suspended system and improves payload trajectory tracking performance.

Fig.~\ref{fig:search_process_s3} further presents the variations of cable swing angles and tension magnitudes during both the trajectory optimization and tracking processes.
The cable swing angles remain within the prescribed constraints throughout the transportation task.
Meanwhile, the cable tensions exhibit smooth variations without abrupt changes or prolonged slack conditions.
This indicates that the system maintains stable force distribution during trajectory tracking, thereby reducing payload swing and improving transportation safety.

These results verify that the proposed trajectory planning and
cascaded control framework enables safe and dynamically feasible
aerial transportation in cluttered environments.
\section{Conclusion}
This paper addresses trajectory planning for a cascaded aerial transportation system and proposes an Enhanced Tube-RRT$^\ast$ planner that incorporates an active mixed-sampling mechanism in the sampling stage and an adaptive expansion strategy in the extension stage, thereby jointly promoting collision-avoidance safety and trajectory smoothness.
Building on the planned path, we further develop a force-feasibility-aware trajectory optimization method.
Finally, closed-loop tracking of the resulting trajectory is validated using a geometric controller.
Simulation results demonstrate the feasibility and effectiveness of the proposed planning-and-optimization framework.
Future work will focus on obstacle-avoidance trajectory generation for cascaded aerial transportation systems in dynamic environments, and on extending the framework to cooperative missions and joint planning for multi-cascaded transportation systems.
\section*{APPENDIX}
\appendices
\section{Proof of Theorem~\ref{thm:pc_tube_rrtstar_cn}}
\label{app:proof_thm1}
Let $n \triangleq |V|$ denote the number of nodes in the current tree,
and let $\mu(\mathcal{X}_{\mathrm{free}})$ denote the volume of the free space.
To ensure sufficient local connectivity and rewiring opportunities during early iterations,
a density-corrected neighborhood radius is adopted:
\begin{equation}
r_n=\min\left\{\left(\frac{\gamma\log n}{\rho\,n}\right)^{1/3},\, 3\epsilon\right\}.
\label{eq:rn_cn}
\end{equation}
where $\gamma>0$ is a constant and $\rho$ represents a lower bound on the effective sampling density over the free space.

Since the sampling distribution follows a mixture strategy that includes a uniform sampling component with weight $\epsilon_u$,
for any measurable subset $A\subseteq \mathcal{X}_{\mathrm{free}}$ the uniform component guarantees
\begin{equation}
\mathbb{P}(\boldsymbol{\xi}_{\mathrm{rand}}\in A)
\ \ge\
\epsilon_u\,\frac{\mu(A)}{\mu(\mathcal{X}_{\mathrm{free}})}.
\end{equation}

Furthermore, to avoid repeated failures caused by turning-angle constraints or potential-field guidance in the adaptive expansion strategy,
we assume the existence of a minimum expansion success probability $p_l\in(0,1]$.
That is, conditioned on $\boldsymbol{\xi}_{\mathrm{rand}}\in \mathcal{X}_{\mathrm{free}}$,
the probability of generating a valid new node is at least $p_l$.

Consequently, the probability that a valid new node is generated within region $A$ in one iteration admits the conservative lower bound
\begin{equation}
\label{eq:rho_lower}
\begin{aligned}
\mathbb{P}(\boldsymbol{\xi}_{\mathrm{new}}\in A) &\ge \rho\,\mu(A),\\
\rho &\triangleq \frac{\epsilon_u\,p_l}{\mu(\mathcal{X}_{\mathrm{free}})}.
\end{aligned}
\end{equation}

Substituting this conservative bound into \eqref{eq:rn_cn} prevents the neighborhood radius $r_n$ from becoming excessively small during early iterations due to insufficient effective sampling density.
This helps maintain adequate neighbor search and rewiring opportunities throughout the tree expansion process.

\section{Proof of Theorem~\ref{thm:pc_tube_rrtstar_ao}}
\label{app:proof_thm2}
Since the neighborhood radius used in this work satisfies the asymptotic connectivity condition required by RRT$^\ast$, and the composite cost function is continuous under path perturbations and additive along trajectory segments, 
the asymptotic optimality of the proposed algorithm follows from the standard proof framework of RRT$^\ast$ \cite{2011Sampling}.

\bibliographystyle{IEEEtran}
\bibliography{refs}

\end{document}